\newtheorem{theorem}{Theorem}
\newtheorem{proof}{Proof}
\begin{document}
%
\title{Deep Distribution-preserving Incomplete Clustering with Optimal Transport}

\author{Mingjie~Luo, ~Siwei~Wang, ~Xinwang~Liu, ~Wenxuan~Tu,  ~Yi~Zhang, ~Xifeng~Guo, ~Sihang~Zhou and ~En~Zhu
\IEEEcompsocitemizethanks{
\IEEEcompsocthanksitem M. Luo, S. Wang, X. Liu, W. Tu, Y. Zhang, X. Guo, S. Zhou and E. Zhu are with School of Computer, National University of Defense Technology, Changsha, China, 410073 (e-mail: \{luomingjie13, \, wangsiwei13, \, xinwangliu, \, enzhu\}@nudt.edu.cn).}}

\maketitle

\begin{abstract}
Clustering is a fundamental task in the computer vision and machine learning community. Although various methods have been proposed, the performance of existing approaches drops dramatically when handling incomplete high-dimensional data (which is common in real world applications).
To solve the problem, we propose a novel deep incomplete clustering method, named \underline{D}eep \underline{D}istribution-preserving \underline{I}ncomplete \underline{C}lustering with \underline{O}ptimal \underline{T}ransport (DDIC-OT). To avoid insufficient sample utilization in existing methods limited by few fully-observed samples, we propose to measure distribution distance with the optimal transport for reconstruction evaluation instead of traditional pixel-wise loss function. Moreover, the clustering loss of the latent feature is introduced to regularize the embedding with more discrimination capability.
As a consequence, the network becomes more robust against missing features and the unified framework which combines clustering and sample imputation enables the two procedures to negotiate to better serve for each other. 
Extensive experiments demonstrate that the proposed network achieves superior and stable clustering performance improvement against existing state-of-the-art incomplete clustering methods over different missing ratios.

\end{abstract}

\section{Introduction}
Clustering is one of the fundamental and important unsupervised learning tasks in data science, image analysis and machine learning community \cite{liu2018late,liu2020efficient,peng2018structured,wu2019deep,kang2020partition,zhang2015low,liu2019multiple,yang2015sparse,zhang2020deep}. A wide variety of data clustering methods have been proposed to organise similar items into same groups and achieve promising performance, e.g., $k$-means clustering, Gaussian Mixture Model (GMM), spectral clustering and deep clustering recently. However, existing clustering approaches all hold one premise that the data themselves are complete while data with missing features are quite common in reality. Data incompleteness occurs due to many factors, e.g. sensor failure, unfinished collection and data storage corruption. For example, face images are covered with masks leading to missing features during COVID-19 \cite{greenhalgh2020face}. When facing with various types of missing features, incomplete data clustering has draw increasing attention in recent years \cite{tian2007face,li2014partial,wang2018partial,yang2019adaptive,liu2018late,liu2020efficient}(see Figure. \ref{Fig.problemsetting}).

\begin{figure}[t] 
\centering 
\includegraphics[width = 0.5\textwidth]{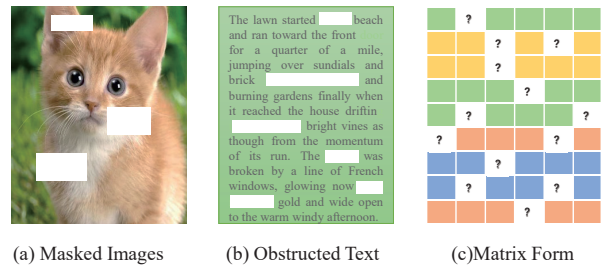} 
\vspace{-30pt}
\caption{(a) Image with missing features due to obstruction or shadow \cite{krizhevsky2012imagenet}; (b) Text data occupied with imperfect storage. (c) Data matrix with missing values. }
\label{Fig.problemsetting} 
\end{figure}

Existing incomplete clustering can be roughly categorized into two mechanisms, \textbf{heuristic-based} and \textbf{learning-based} respectively. Both of them firstly impute the missing features and then the full data matrix can be applied with traditional clustering algorithms. The heuristic imputation methods often rely on statistic property, e.g., zero-filling (ZF) and mean-filling (MF) after normalizing. Median values are also popular for imputation in genetic study. Particularly, the KNN-filling method considers the local reliable partners which fills the missing entries with the mean value of  $k$-closest neighbors. When facing with complex high-dimensional data, heuristic-based methods perform poorly since the simple imputations cannot obtain enough information to precisely recover data.

Recently, learning-based imputation methods receive enormous attention and become to be the mainstream. Existing work can be categorized into shallow and deep learning framework. The shallow representatives normally assume that the data are low-rank and therefore apply iterative methods to recover missing values \cite{nie2012low,jain2013low,wen2012solving,lu2014depth,fan2019factor}. Moreover, the Expectation-Maximum (EM) algorithm iteratively estimates the maximum likelihood and then inferences the missing variables until convergent. With the improvements of deep learning architectures, various deep networks have been proposed to handle incompleteness. A desirable attribute for deep approaches is that they should accurately inference the joint and marginal distributions of the data.  Therefore, variants of generation-style networks are introduced including Generative Adversarial Networks (GAN) and Variational Auto-Encoder (VAE). In \cite{yoon2018gain}, a generator utilizes the observed features to generate 'complete' data and the discriminator attempts to determine which components are actually observed or imputed. With the adversary training strategy, the generated missing features could approximate the real data distribution. Followed this line, enormous GAN and VAE-based approaches are put forward to minimizing the distances between real values and imputed matrices. 

Although these aforementioned methods offer solutions for incomplete data clustering, several drawbacks in existing mechanism cannot be neglected:
 i) Existing incomplete clustering methods follow a two-step manner, where the imputation stage and the clustering stage are separated from each other. In other words, the imputed features are not designed for clustering task, which may heavily degrade the clustering performance in return.
 ii) When facing with high-dimensional data (e.g., images, text), both of the shallow and deep methods perform poorly due to the insufficient observed information with inaccurate imputation. These results in sharp degradation in clustering task performance. 

In this paper, we propose a novel deep incomplete clustering method, which we refer as Deep Distribution-preserving Incomplete Clustering with Optimal Transport (DDIC-OT), that generalizes the well-known Deep Embedding Clustering network (DEC) to handle missing features. Different from existing pixel-by-pixel reconstruction in traditional autoencoder, we propose to minimize the Wasserstain distance between observed data and the reconstructed data with optimal transport. Moreover an addition clustering layer is added into the embedded representation level with KL-divergence for measuring clustering loss. By optimizing the novel network, the distribution of original data can be well-preserved and in return the missing features can be more accurately imputed by guidance of latent clustering structures. Thus, the proposed DDIC-OT simultaneously utilizes the imputation  and the embedded clustering procedures so that they can be jointly negotiated with each other and reach consensus best serving for clustering task. Finally, the proposed DDIC-OT is showcased in extensive experiments on a wide variety of benchmarks with different missing ratios, to evaluate its effectiveness. As demonstrated, the proposed network enjoys superior clustering performance in comparison with existing state-of-the-art imputation methods by large margins.

\begin{figure}[tb] 
\centering 
\includegraphics[width = 0.49\textwidth]{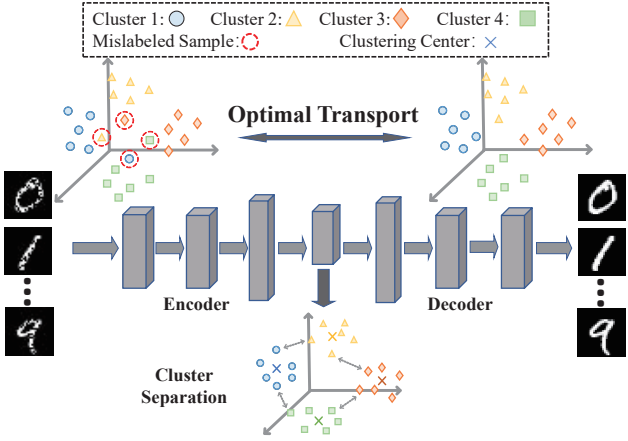} 
\caption{The framework of the proposed DDIC-OT. Instead of pixel-to-pixel reconstruction, we impute features by minimizing the distribution distance with optimal transport. Moreover, the latent embedding representations are regularized with clustering loss to ensure intra-cluster discrimination. The joint loss functions seamlessly negotiate incomplete imputation and clustering tasks as a unified framework to improve the performance of each other.}
\label{Fig.framework} 
\end{figure}

The contributions of our DDIC-OT are summarized as follows,
\begin{enumerate}
    \item We mathematically analyze the failures of existing incomplete clustering methods in theory when facing with high-dimensional data. To avoid insufficient training brought by the spareness of full-observed data, a novel end-to-end deep clustering network is proposed to  minimize the wasserstain distances between original and reconstructed distribution. 
    \item  We regularize the latent distribution with more discriminate separation to further enhance task performance. By the guidance of the unified loss function, the network decodes the informative latent representations contributing to better recovery and clustering. To the best of our knowledge, this could be the first work of end-to-end deep incomplete clustering network.
    \item Comprehensive experiments are conducted on six high-dimensional benchmarks datasets with various incomplete ratios. As the experimental results show, the proposed network significantly outperforms state-of-the-art incomplete clustering methods by large margins.
\end{enumerate}

\section{Notion and Related Work}
\subsection{Notion}
 Throughout this paper, we use boldface uppercase letters and lowercase letters to denote matrices and vectors respectively. The $(i,j)$-th elements of a matrix $\mathbf{U}$ is referred as $\mathbf{U}_{ij}$.

Let $\mathbf{X} \in \mathbb{R}^{n \times d}$ be the data matrix with $n$ samples $d$ dimension. Then we define a missing index matrix $\mathbf{M} \in \{0,1\}^{n \times d}$ as follows,
\begin{equation}
  \mathbf{M}_{ij}=\left\{\begin{array}{ll}
  1 & \text { if the entry } (i,j) \text { of } \mathbf{X} \text { can be observed, } \\
  0 & \text { otherwise. }
  \end{array}\right.
\end{equation}

With the defined mask matrix $\mathbf{M}$, the incomplete data matrix we observed can be defined as 
\begin{equation}
    \mathbf{X}^{(obs)} = \mathbf{X} \circ \mathbf{M} + NaN \circ (\mathbf{1}_{n \times d} - \mathbf{M}),
\end{equation}
where the $\circ$ denotes the Hadamard (elementwise) product and $NaN$ means \textbf{N}ot \textbf{a} \textbf{N}umber throughout the paper. 

\subsection{Related Work}

\noindent\textbf{Statistical imputation.} \quad Basic statistical methods try to utilize information from the missing data by the means of numerical property. Most of them use statistical attributes to estimate the missing feature values, rather than directly discard incomplete feature information. Incomplete entries are filled with constants to obtain complete data so that they can be directly applied to machine learning tasks, e.g., zero, mean and median. Additionally, KNN imputation method has been considered as an alternative estimating the missing features with the mean of $k$ nearest reliable neighbors \cite{crookston2008yaimpute}.

The Bayesian framework is different from the previous method in that it considers the joint and conditional distribution for dealing with incomplete features. These frames are generally expressed in terms of a maximum-likelihood method, which estimate missing values with the most probable numbers. The most popular method of Bayesian framework is the Expectation Maximization (EM) algorithm \cite{dempster1977maximum,ghahramani1994supervised}.

\noindent\textbf{Deep incomplete clustering.} \ Although deep clustering mechanism has received much attention in recent years, none of existing methods has considered to cluster with incomplete features in an end-to-end manner yet. The typical methods follow the two-step strategy:imputation and clustering separately. They propose to fill missing values through neural networks and then apply clustering algorithms on the estimated dataset. GAIN \cite{yoon2018gain} firstly proposes to impute incomplete features with GAN. Different from the traditional GAN networks, the goal of the discriminator in GAIN is to accurately distinguish whether the data are imputed or real, so as to force the samples generated by the generator to be close to the real data distribution. Unfortunately, the same problems as common GANs, these models are generally difficult to train since the optimization processes are hardly stable. Apart from GAN-like networks, VAEAC \cite{ivanov2019variational} proposes a neural probabilistic model based on variational autoencoder, which can estimate the observed features using stochastic gradient variational inference \cite{kingma2014stochastic}. However these VAE-based method may lead to poor results when the posterior approximation of variational inference is far from the actual posterior approximation. In addition, based on fitting the conditional distribution of the missing data, a Markov chain Monte Carlo (MCMC) scheme has been developed in \cite{Richardson_2020_CVPR}.  

\noindent \textbf{Optimal transport and sinkhorn divergence.} Let $\alpha=\sum_{i=1}^{n} a_{i} \delta_{\mathbf{X}_{i}}$, $\beta =\sum_{i=1}^{n} b_{i} \delta_{\mathbf{Y}_{i}}$ be two discrete distributions formed by empirical given data samples $\mathbf{X,Y}$, and their supports $\mathbf{X} \in \mathbb{R}^{n \times d}, \mathbf{Y} \in \mathbb{R}^{n^{\prime} \times d}$ and frequency vectors $\mathbf{a},\mathbf{b}$. It can be easily obtained that $\mathbf{a}^{\top} \mathbf{1} =1, \mathbf{a} \geq 0, \mathbf{b}^{\top} \mathbf{1} =1, \mathbf{b} \geq 0$.The $q$-th Wasserstein distance corresponds to these two distributions $\alpha$ and $\beta$ is denoted as follows,

\begin{equation}
\label{wasseatain-1}
    W_{q} (\alpha, \beta) \stackrel{\text { def }}{=} \min _{\mathbf{P} \in U(\mathbf{a}, \mathbf{b})}\langle\mathbf{F}, \mathbf{C}\rangle,
\end{equation}
where $U(\mathbf{a}, \mathbf{b}) \stackrel{\text { def }}{=}\left\{\mathbf{F} \in \mathbb{R}^{n \times n^{\prime}}_{+}: \mathbf{F} \mathbf{1} =\mathbf{a}, \mathbf{F}^{\top} \mathbf{1}_{n}=\mathbf{b}\right\}$ and $\mathbf{C}=\left(\left\|x_{i}-y_{j}\right\|^{q}\right)_{i j} \in \mathbb{R}^{n \times n^{\prime}}$ denotes as the cost matrix of pairwise squared distances between the support sets. In our paper, we set $q=2$. The Wasserstein distance denoted in Eq. (\ref{wasseatain-1}) is often jointly introduced with an entropy regularization, 
\begin{equation}
\label{wasseatain-2}
    W_{q}^{\epsilon} (\alpha, \beta) \stackrel{\text { def }}{=} \min _{\mathbf{F} \in U(\mathbf{a}, \mathbf{b})}\langle\mathbf{F}, \mathbf{C}\rangle - \epsilon h(\mathbf{F}),
\end{equation}
where  $h(\mathbf{F}) \stackrel{\text { def }}{=} -\sum_{i j} f_{i j} \log f_{i j}$ denotes the entropy regularization. Eq. (\ref{wasseatain-2}) can be efficiently optimized using Sinkhorn algorithm \cite{peyre2019computational}. Based on Eq. (\ref{wasseatain-2}), a symmetric divergence can be represented as
\begin{equation}
\label{sinkhorn}
S_{\epsilon}(\alpha, \beta) \stackrel{\text { def }}{=} \mathrm{OT}_{\epsilon}(\alpha, \beta)-\frac{1}{2}\left(\mathrm{OT}_{\epsilon}(\alpha, \alpha)+\mathrm{OT}_{\epsilon}(\beta, \beta)\right).
\end{equation} 

The Sinkhorn divergence in Eq. (\ref{sinkhorn}) offers a tractable alternative for Wasserstein distance calculations, and easily be accelerated by GPU. In our paper, we use the sinkhorn divergence to measure the OT distance of two distributions.


\section{DDIC-OT}
\subsection{Motivation}
\noindent \textbf{Problem analysis} Although the aforementioned methods have been proposed to solve incomplete data clustering to some extent, most of them are evaluated with very small-dimensional data and make them unpractical in real scenarios. When facing with high-dimensional data (e.g., images, text), both of the existing shadow and deep methods perform poorly due to the insufficient observed information with inaccurate imputation. We theoretically analyze this phenomenon with the following Theorem \ref{hard}.

\begin{theorem}
\label{hard}
Suppose the data are \textit{i.i.d (independently and identically distributed)}, a fully-observed high-dimensional data sample exists with low probability when facing incompleteness. 
\end{theorem}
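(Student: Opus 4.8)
The plan is to make the informal statement precise by positing a probabilistic model for the mask matrix $\mathbf{M}$ and then computing the probability that some row of $\mathbf{X}$ is fully observed. Since the samples are i.i.d.\ and incompleteness arises from independent corruption events (sensor failure, storage loss, etc.), I would model each entry $\mathbf{M}_{ij}$ as an independent Bernoulli random variable with $\Pr[\mathbf{M}_{ij}=1]=p$, where $p=1-r$ and $r\in(0,1)$ is the fixed missing ratio. Under this model, a \emph{fully-observed sample} is a row $i$ with $\mathbf{M}_{ij}=1$ for every $j\in\{1,\dots,d\}$, and the goal is to show that such a row is unlikely to exist once $d$ is large.

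The first step is to compute the probability that a single fixed sample is fully observed. By independence of the entries within a row, $\Pr[\text{row } i \text{ fully observed}]=\prod_{j=1}^{d}\Pr[\mathbf{M}_{ij}=1]=p^{d}$. Because $p<1$, this quantity decays exponentially in the dimension $d$, which already captures the essence of the theorem: in the high-dimensional regime the per-sample chance of complete observation is vanishingly small.

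The second step is to lift this per-sample estimate to the ``exists'' formulation over the entire data matrix. Let $N$ denote the number of fully-observed rows among the $n$ samples. By linearity of expectation, $\mathbb{E}[N]=n\,p^{d}$, and Markov's inequality (equivalently a union bound over rows) gives $\Pr[N\ge 1]\le \mathbb{E}[N]=n\,p^{d}$; using independence across rows one even obtains the exact value $\Pr[N\ge 1]=1-(1-p^{d})^{n}\le n\,p^{d}$. Hence whenever $d$ grows so that $n\,p^{d}\to 0$ — for instance, whenever $d\gg \log n/\log(1/p)$ — the probability that any fully-observed sample exists tends to zero, which establishes the claim.

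I expect the main obstacle to be conceptual rather than computational: the arithmetic is elementary once the model is fixed, so the real work lies in justifying the modelling choices. In particular, one must argue that the i.i.d.\ hypothesis on the data legitimately transfers to an independent-Bernoulli assumption on the mask entries, and one must pin down a quantitative meaning for ``low probability'' (here, exponential decay in $d$, together with the threshold $d=\Theta(\log n)$ beyond which existence becomes unlikely). Additional care is needed if the missing mechanism is allowed to be correlated across coordinates, since then $p^{d}$ must be replaced by the joint probability of observing a whole row, and recovering the exponential bound would require a further independence or mixing assumption.
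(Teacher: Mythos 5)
Your proposal is correct, and its first step is essentially the paper's \emph{entire} proof: the paper assumes each entry is observed independently with probability $1-p$ (where $p$ is the missing ratio), multiplies across the $d$ coordinates to get $(1-p)^d$ for a fixed sample, and then illustrates with $p=0.1$, $d=300$ (probability on the order of $10^{-14}$), concluding that the probability vanishes as $d$ grows. Where you go beyond the paper is your second step: the paper never addresses the existential quantifier in the statement --- it bounds only the probability that one fixed sample is fully observed and silently passes from ``each sample is unlikely to be complete'' to ``very few samples are complete.'' Your union bound $\Pr[N\ge 1]\le n\,(1-p)^{d}$ (with the exact value $1-\bigl(1-(1-p)^{d}\bigr)^{n}$ under row independence), together with the threshold $d\gg \log n/\log\bigl(1/(1-p)\bigr)$, is what actually makes ``a fully-observed sample exists with low probability'' precise, since a tiny per-sample probability is compatible with near-certain existence when $n$ is large relative to $(1-p)^{-d}$; this is a genuine strengthening, not just bookkeeping. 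You also correctly identify the weak point shared by both arguments: the theorem's hypothesis is that the \emph{data} are i.i.d., but what the proof actually uses is independence of the \emph{mask} entries $\mathbf{M}_{ij}$ (an MCAR-type assumption), which does not follow from the data being i.i.d.; the paper makes this leap implicitly, while you at least flag it as a modelling choice that must be justified. One notational caution if you align your write-up with the paper: there $p$ denotes the missing ratio, so the observation probability is $1-p$, whereas you use $p$ for the observation probability.
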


\begin{proof}
Suppose the missing ratio is $p (0 \leq p \leq 1)$. Given a matrix $\mathbf{X} \in \mathbb{R}^{n \times d}$. For each sample $x_i$, we can obtain the following equation.
\begin{equation}
\begin{split}
 P\left(\mathbf{X}_{i1}, \ldots, \mathbf{X}_{id} \right)=P\left(\mathbf{X}_{i1}\right)  \cdots P\left(\mathbf{X}_{id}\right) = \left(1 - p\right)^d,
\end{split}
\end{equation}
where $P\left(\mathbf{X}_{i1}\right)$ denotes the probability $\mathbf{X}_{i1}$ can be observed.

Taking $p=0.1,d=300$ as an example, with $10^{-14}$ probability the sample $x_i$ can be fully-observed. With the increasing dimension $d$, the probability becomes smaller and approximates 0. This completes the proof. 
\end{proof}

The Theorem \ref{hard} illustrates that very few samples are fully complete when the dimensions are relatively high. Therefore, the traditional statistical and deep generative methods fail to impute proper values lacking of sufficient information, e.g., knn-filling and GAN-style solutions. In \cite{muzellec2020missing}, the Wasserstain distance is firstly applied to impute missing features where the assumption is to minimize the discrepancy of missing data distribution and the complete data distribution. In the low-dimensional incomplete setting (less than 50), the experiments results are promising and proven to show more stable along with change of the incomplete ratios. However, as confronting with much higher dimension data types (e.g., images, videos and text), very few fully-complete data can be obtained making the empirical estimation of target distribution (complete data distribution) difficult and inaccurate. Therefore these methods perform poorly in downstream clustering tasks (see results in Table. \ref{aggregated table}).

Different from existing assumptions, we propose to jointly solve the two processes in a unified framework: reconstruction and clustering. The natural way of reconstruction is to apply autoencoder models. The observed values can be regraded as 'supervised' signals for the reconstruction. However, with few informative information, it is not reasonable to only reconstruct the missing counterparts since the reconstruction may destroy the geometry distribution features for the data and the clustering performance is heavily affected. In this paper, we decide to recover the latent distribution instead of pixel-level approximation. Specially, we adopt the latent variable models defined by an encoder-decoder manner, where we firstly encode original data $x_i$ into the latent code $z_i$ in the  latent space $\mathcal{Z}$ and then $z_i$ is decoded to the reconstructed image $\hat{x}_i$. This process can be expressed as,

\begin{equation}
p_{\hat{X}}(\hat{x}):=\int_{\mathcal{Z}} p_{p_{\hat{X}}(\hat{x}|z)}(\hat{x}|z) p_{z}(z) d z, \quad \forall x \in \mathcal{X}
\end{equation}
where $p_{x}(z|x),p_{\hat{X}}(\hat{x}|z)$ are parameterized with the encoder $f_e$ and decoder $f_d$ network. Then the distribution-preserving loss can be measured with Eq. (\ref{sinkhorn}) respecting to $p_{X}$and $p_{\hat{X}}$,
\begin{equation}
\label{loss_s}
    L_{s}(\mathbf{X},\mathbf{\hat{X}}) = S_{\epsilon}(\mathbf{X},f_d(f_e(\mathbf{X}))).
\end{equation}

\subsection{Overall Network Architecture}
In this section, we leverage the one-stage deep incomplete clustering introduced in the previous section as a basis to demonstrate the process of the proposed learning algorithm, the overall flowchart is illustrated in Figure \ref{Fig.framework}. The proposed clustering model consists of three parts, an encoder, a decoder, and a soft clustering layer, specifically, the method relies on a linear combination based on two objective functions, representing the optimal transport distance and clustering loss respectively. The joint optimization process can be described as follows:
\begin{equation}
\label{loss}
L=L_{s}+\gamma L_{c},
\end{equation}
where $L_{s}$ is the sinkhorn divergence shown in Eq. (\ref{sinkhorn}) and $L_{c}$ is the clustering loss. $\gamma$ is a hyper-parameter, which is used to balance the two costs. 
Consider a dataset $\mathbf{X}$ with $n$ samples, and each $x_{i}$ $\in$ $\mathbb{R}^d$ where $d$ is the dimension. The number of clusters $k$ is known, for each input data $x_{i}$ we denote the nonlinear mapping $f_e: x_{i} \rightarrow z_{i}$ and $f_d: z_{i} \rightarrow \hat{x_{i}}$ where $z_{i}$ is the low dimensional feature space, $\hat{x_{i}}$ is the complete data learned through the network.

The clustering loss is defined as KL divergence between distributions $P$ and $Q$ proposed in\cite{xie2016unsupervised}, where $P$ is the soft assignment of the distribution $z$:
\begin{equation}
\label{P}
p_{i j}=\frac{\left(1+\left\|z_{i}-\mu_{j}\right\|^{2}\right)^{-1}}{\sum_{j}\left(1+\left\|z_{i}-\mu_{j}\right\|^{2}\right)^{-1}},
\end{equation}
and then the cluster assignment can be obtained $s_{i} =  \mathop{\arg\max}_{j} p_{i j}$. Then $Q$ is the target distribution derived from $P$,
\begin{equation}
\label{Q}
q_{i j}=\frac{p_{i j}^{2} / \sum_{i} p_{i j}}{\sum_{j}\left(p_{i j}^{2} / \sum_{i} p_{i j}\right)}.
\end{equation}
Therefore, the clustering loss is defined as 
\begin{equation}
\label{loss_c}
L_{c}=K L(Q \| P)=\sum_{i} \sum_{j} q_{i j} \log \frac{q_{i j}}{p_{i j}}.
\end{equation}

We summarize the merits of our proposed framework with the following factors: i) more naturally handle with incomplete clustering in high-dimensional space. The $L_s$ loss accomplishes the reconstruction samples with preserving geometry characteristics. ii) more flexible that does not require the prior distribution of $\mathbf{X}$ or $\mathbf{Z}$. Instead of explicit distribution formulation, our encoder and decoder network implicitly estimate the latent distribution with more flexibility; iii) regularizing the latent distribution $q$ with more discriminate separation to further enhance task performance. As the empirical experimental results show, the guidance of the joint loss function updates the network leading to the improvement of clustering performance. 

\subsection{Model Training}
The training phase of the model consists of two phases: the pre-training phase where the network only contains reconstruction loss and the fine-tune phase where both optimal transport distance and clustering loss are optimized. Our encoder consists of a fully-connected multi-layer perceptron with dimensions $d$-500-500-1000/2000-10 and the decoder is a mirrored version of the encoder. The details of the network architecture are provided in the supplementary materials. 
The optimizer Adam with init learning rate $\eta$ = 0.001 is applied for all datasets, and the batch size is set to 256. After pre-training, we provide two options of the parameter $\lambda$ 150 and 100. Furthermore, for sinkhorn divergence, we set entropy regularization parameters $\epsilon$ as 0.01. In addition, we will stop training if the percentage of label distribution change between two consecutive updates 
for target distribution is less than the threshold $\delta$ or the number of iterations meets $MaxIter$. Then the convergence threshold $\delta$ and $MaxIter$ are set to 0.1 and 200 separately. The full training procedure is summarized in Algorithm \ref{Algorithm-proposed}.

\alglanguage{pseudocode}
\begin{algorithm}[t]
\caption{DDIC-OT}
\label{Algorithm-proposed}
\hspace*{0.02in}{\bf Input:}  
Missing data $\mathbf{X}_m$; Cluster number $k$; Hyper-parameter $\lambda$; Batchsize $N$; Maximum iterations $MaxIter$; Stopping threshold $\delta$; Learning rate $\eta$. \\
\hspace*{0.02in}{\bf Output:} Clustering Assignment $S$.
\begin{algorithmic}[1]
\State Initialize $\mathbf{X}_m$ with mean filling.
\State Initialize clustering centroids $u$.
\For{$iter = 0$ to $Maxiter$}
\For{$i = 0$ to $\lfloor n/N \rfloor$}
    \State Sample a minibatch $\left\{x_{i}\right\}_{i=1}^m$ from $\mathbf{X}$.
    \State Compute related variables by $z_{i}$ = $f_{e}(x_{i})$, $\hat{x_{i}}$ = \State $f_{d}(z_{i})$.
    \State Compute $P_{i}$, $Q_{i}$ using Eq. (\ref{P}) and Eq. (\ref{Q})
    \State Compute clustering assignment for $\left\{x_{i}\right\}_{i=1}^m$.
    \State Compute overall loss $L$ by Eq. (\ref{loss}).
    \State Back-propagation and update model weights. 

\EndFor
    \State Compute $Z$ = $f_{e}(\mathbf{X})$.
    \State Compute $P$ and $Q$. 
    \State Compute clustering assignment $S$.
    \If{$sum(S_{iter+1}\neq S_{iter})/n$ \textless $\delta$}
    \State    Stop training.
    \EndIf
\EndFor
\end{algorithmic}
\end{algorithm}

\begin{table*}[htb]
\caption{The aggregated ACC, NMI and Purity comparison (mean$\pm$std) of different algorithms on  benchmark datasets. '-' means out of the GPU memory. The detailed results are omitted due to space limit and provided in supplementary materials.}
\label{aggregated table}
\centering
\resizebox{\textwidth}{!}{
\begin{tabular}{|c||c|c|c|c|c|c|c|c|c|c|}
\toprule
Method & \multicolumn{5}{|c|}{Shallow} & \multicolumn{5}{|c|}{Deep} \\
\hline
Dataset & MF    & ZF    & LRC   & MNC   & FSGR  & GAIN  & VAEAC & MIWAE & MDIOT & Ours \\\hline
\multicolumn{11}{|c|}{ACC$(\%)$}            \\\hline
Mnist   & $54.66\pm3.13$ & $52.48\pm3.33$ & $51.82\pm3.46$ & $53.28\pm3.08$ & $53.22\pm2.97$ & $52.05\pm3.06$ & $53.73\pm3.63$ & -  & $54.31\pm3.46$ & $\mathbf{84.31\pm0.0}$    \\\hline
Usps    & $61.63\pm3.92$ & $60.72\pm3.24$ & $61.78\pm3.63$ & $61.80\pm3.61$ & $60.22\pm3.38$ & $60.20\pm3.11$ & $61.90\pm3.51$ & $48.75\pm4.73$ & $63.63\pm4.07$ & $\mathbf{74.48\pm0.0}$    \\\hline
Fmnist  & $51.14\pm3.87$ & $49.56\pm3.84$ & $51.70\pm3.75$ & $51.56\pm3.44$ & $52.12\pm4.16$ & $50.75\pm4.39$ & $52.04\pm4.05$ & -  & $50.91\pm3.54$ & $\mathbf{58.88\pm0.0}$    \\\hline
Reuters & $56.26\pm11.67$ & $50.96\pm9.15$ & $53.79\pm5.48$ & $54.17\pm5.27$ & $53.72\pm5.47$ & $51.25\pm9.49$ & $60.22\pm8.62$ & $54.51\pm10.00$ & $52.18\pm8.21$ & $\mathbf{76.06\pm0.0}$   \\\hline
COIL20  & $54.33\pm4.95$ & $42.77\pm5.86$ & $58.73\pm4.56$ & $59.10\pm4.47$ & $55.00\pm4.74$ & $55.10\pm4.42$ & $60.21\pm4.22$ & $57.87\pm5.13$ & $58.95\pm4.67$ & $\mathbf{66.40\pm0.0}$    \\\hline
Letter  & $35.77\pm1.22$ & $33.40\pm1.47$ & $36.95\pm1.36$ & $33.68\pm1.53$ & $37.34\pm1.05$ & $35.54\pm1.23$ & $36.11\pm1.32$ & $27.42\pm0.91$ & $35.40\pm1.39$ & $\mathbf{47.15\pm0.0}$    \\\hline
\multicolumn{11}{|c|}{NMI$(\%)$}          \\\hline
Mnist   & $47.82\pm1.42$ & $45.48\pm1.53$ & $46.07\pm1.59$ & $46.55\pm1.36$ & $46.57\pm1.27$ & $46.06\pm1.31$ & $47.62\pm1.19$ & -  & $49.62\pm1.07$ & $\mathbf{76.84\pm0.0}$    \\\hline
Usps    & $58.51\pm3.92$ & $55.89\pm3.24$ & $57.56\pm3.63$ & $58.02\pm3.61$ & $56.21\pm3.38$ & $57.54\pm1.51$ & $58.32\pm1.25$ & $44.60\pm4.38$ & $62.33\pm1.37$ & $\mathbf{74.48\pm0.0}$    \\\hline
Fmnist  & $49.83\pm3.87$ & $47.03\pm3.84$ & $49.98\pm3.75$ & $50.07\pm3.44$ & $50.19\pm4.16$ & $50.55\pm1.16$ & $49.93\pm1.08$ & -  & $49.49\pm1.03$ & $\mathbf{61.28\pm0.0}$    \\\hline
Reuters & $26.37\pm11.67$ & $21.41\pm9.15$ & $26.43\pm5.48$ & $27.37\pm5.27$ & $25.82\pm5.47$ & $21.26\pm9.94$ & $31.99\pm8.04$ & $23.71\pm9.67$ & $22.28\pm11.42$ & $\mathbf{44.92\pm0.0}$    \\\hline
COIL20  & $68.89\pm4.95$ & $55.75\pm5.86$ & $73.38\pm4.56$ & $74.05\pm4.47$ & $70.22\pm4.74$ & $69.14\pm2.67$ & $74.36\pm2.38$ & $71.72\pm2.64$ & $73.43\pm2.10$ & $\mathbf{77.72\pm0.0}$    \\\hline
Letter  & $37.58\pm1.22$ & $34.79\pm1.47$ & $39.18\pm1.36$ & $35.06\pm1.53$ & $39.43\pm1.05$ & $37.98\pm0.57$ & $38.56\pm0,57$ & $30.04\pm0.63$ & $37.45\pm0.72$ & $\mathbf{52.29\pm0.0}$    \\\hline
\multicolumn{11}{|c|}{Purity$(\%)$}          \\\hline
Mnist   & $58.37\pm1.62$ & $56.64\pm2.21$ & $57.35\pm1.79$ & $57.96\pm1.83$ & $58.09\pm1.81$ & $56.78\pm1.89$ & $57.91\pm1.57$ & -  & $59.28\pm1.47$ & $\mathbf{84.31\pm0.0}$   \\\hline
Usps    & $69.40\pm2.47$ & $67.74\pm2.74$ & $69.11\pm2.47$ & $69.55\pm2.00$ & $67.70\pm2.29$ & $67.72\pm2.34$ & $69.36\pm2.45$ & $55.40\pm5.35$ & $71.35\pm2.77$ & $\mathbf{80.19\pm0.0}$    \\\hline
Fmnist  & $56.09\pm2.20$ & $53.03\pm2.84$ & $56.71\pm2.22$ & $56.61\pm2.42$ & $57.15\pm1.52$ & $55.98\pm2.16$ & $56.62\pm1.87$ & -  & $56.27\pm1.88$ & $\mathbf{63.47\pm0.0}$    \\\hline
Reuters & $74.71\pm5.10$ & $74.21\pm4.97$ & $78.41\pm4.08$ & $\mathbf{78.97\pm3.59}$ & $77.79\pm3.91$ & $73.78\pm5.12$ & $78.18\pm3.94$ & $74.43\pm4.99$ & $73.67\pm4.76$ & $75.72\pm0.0$    \\\hline
COIL20  & $58.03\pm5.49$ & $45.63\pm4.97$ & $63.25\pm3.93$ & $63.96\pm4.09$ & $59.38\pm3.83$ & $58.77\pm4.19$ & $64.63\pm4.16$ & $61.30\pm4.04$ & $63.48\pm3.67$ & $\mathbf{70.94\pm0.0}$    \\\hline
Letter  & $37.90\pm1.01$ & $35.23\pm1.89$ & $39.30\pm1.02$ & $35.49\pm1.67$ & $39.76\pm0.99$ & $38.12\pm0.99$ & $38.68\pm0.88$ & $29.04\pm0.83$ & $37.66\pm1.14$ & $\mathbf{49.56\pm0.0}$  \\
\bottomrule
\end{tabular}}
\end{table*}

\section{Experiments}

\subsection{Experiments Setup}
\noindent\textbf{Datasets}
In this paper, we conduct extensive experiments on the six widely-used large-scale benchmark datasets. (1) \textbf{MNIST-full} and \textbf{Fashion-MNIST}\cite{lecun1998gradient}: 70000 images including the training and testing split are combined into a unified dataset. (2)\textbf{USPS}\cite{hull1994database}: This dataset contains a total of 9298 grayscale samples with 16 $\times$ 16 pixels. (3) \textbf{COIL-20}\footnote{https://www.cs.columbia.edu/CAVE/software/softlib/}:COIL-20 consists of 1440 images of 20 objects taken by cameras from varying angles. (4)\textbf{Reuters-10K}:  We used 4 root categories: corporate/industrial, government/social, markets and economics as labels and excluded all documents with multiple labels. We randomly sampled a subset of 10000 examples and computed tf-idf features on the 2000 most frequent words. We term this dataset as Reuters-10K.(5)\textbf{Letter} \footnote{https://www.nist.gov/itl/products-and-services/emnist-dataset}: The Letter dataset merges a balanced set of the 26 letters with 800 images each class.  

Followed by existing incomplete clustering task setting,  we set seven groups of incomplete ratios as $\{0.1,0.2,0.3,\cdots 0.6,0.7\}$ for each dataset in our experiments. Incomplete ration means the percentage of missing features in all samples.

\begin{table}[h]
\centering
\label{dataset table}
\caption{Benchmark dataset description.}
\begin{center}
\begin{tabular}{|c|c|c|c|c|}
\hline
Dataset & Samples & Dimensions & Classes   \\
\hline
\hline
Mnist   & 70000                         & 784                           & 10    \\
USPS    & 9298                          & 256                           & 10     \\
FMNIST  & 70000                         & 784                           & 10     \\
Reuters-10k & 10000                     & 2000                          & 4    \\
COIL-20 & 1440                          & 1024                          & 20   \\
Letter  & 20800                         & 784                           & 26     \\
\hline
\end{tabular}
\end{center}
\end{table}

\noindent\textbf{Evaluation Metrics} \quad
In our experiements, we used three standard clustering performance metrics for evaluation: (1) \textbf{Accuracy (ACC)} is computed by
assigning each cluster with the dominating class label and
taking the average correct classification rate as the final
score, (2) \textbf{Normalised Mutual Information (NMI)} quantifies the normalised mutual dependence between the predicted labels and the ground-truth, and (c) \textbf{Purity} measures the proportion of the number of samples correctly clustered to the total number of samples. All of these metrics scale from 0 to 1 and higher values indicate better performance. Specially, we report the mean values and standard derivations of 10 independent runs to avoid the randomness brought by the different initializations of $k$-means.

\subsection{Compared SOTA methods}
\noindent(1) $\textbf{Mean-Filling (MF)}$: The missing features are imputed with the mean of the observed values in the corresponding dimensions.(2) $\textbf{Mean-Filling (MF)}$: The missing features are imputed with zeros in the normalized data matrix.(3) $\textbf{Low-rank Completion(LRC)}$\cite{nie2012low}: The method attempts to recover data matrix with low-rank assumption. (4) $\textbf{Max Norm Completion (MNC)}$\cite{fan2019factor}: MNC adopts the max-norm to complete missing features.(5)\textbf{Factor Group-Sparse Regularization for Efficient Low-Rank Matrix Recovery(FSGR)}\footnote{https://github.com/udellgroup/Codes-of-FGSR-for-effecient-low-rank-matrix-recovery}\cite{fan2019factor}   The author proposes factor group-sparse regularizers to accomplish low-rank matrix completion task.(6)$\textbf{GAIN}$\footnote{https://github.com/jsyoon0823/GAIN}\cite{yoon2018gain}: \textbf{Missing data imputing using Generative Adversarial Nets}. It proposes a method that uses GAN to estimate and complete the work of filling missing values. (7)$\textbf{ VAEAC:}\footnote{https://github.com/tigvarts/vaeac}$\cite{ivanov2019variational}\textbf{Variational Autoencoder with Arbitrary Conditioning}.  It is a latent variable model trained using stochastic gradient variational Bayes.(8)$\textbf{MIVAE}$\footnote{https://github.com/pamattei/miwae}\cite{mattei2019miwae}: MIWAE is based on the importance-weighted autoencoder, and maximises a potentially tight lower bound of the log-likelihood of the observed data. (9)$\textbf{MDIOT}$\footnote{https://github.com/BorisMuzellec/MissingDataOT}\cite{muzellec2020missing}: \textbf{Missing Data Imputing using Optimal Transport.} This paper leverages OT to define a loss function for missing data distribution and complete data distribution. The hyper-parameters used in all our comparative experiments follow their corresponding papers.

For all the compared methods above, we have downloaded their public implementations with Matlab and Pytorch. All our experiments are conducted on desktop computer  with Intel i7-9700K CPU @ 3.60GHz$\times$12, 64 GB RAM and GeForce RTX 3090 25GB.

\begin{figure*}[tbp] 
\centering 
\includegraphics[height= 0.5\textheight,width = 1\textwidth]{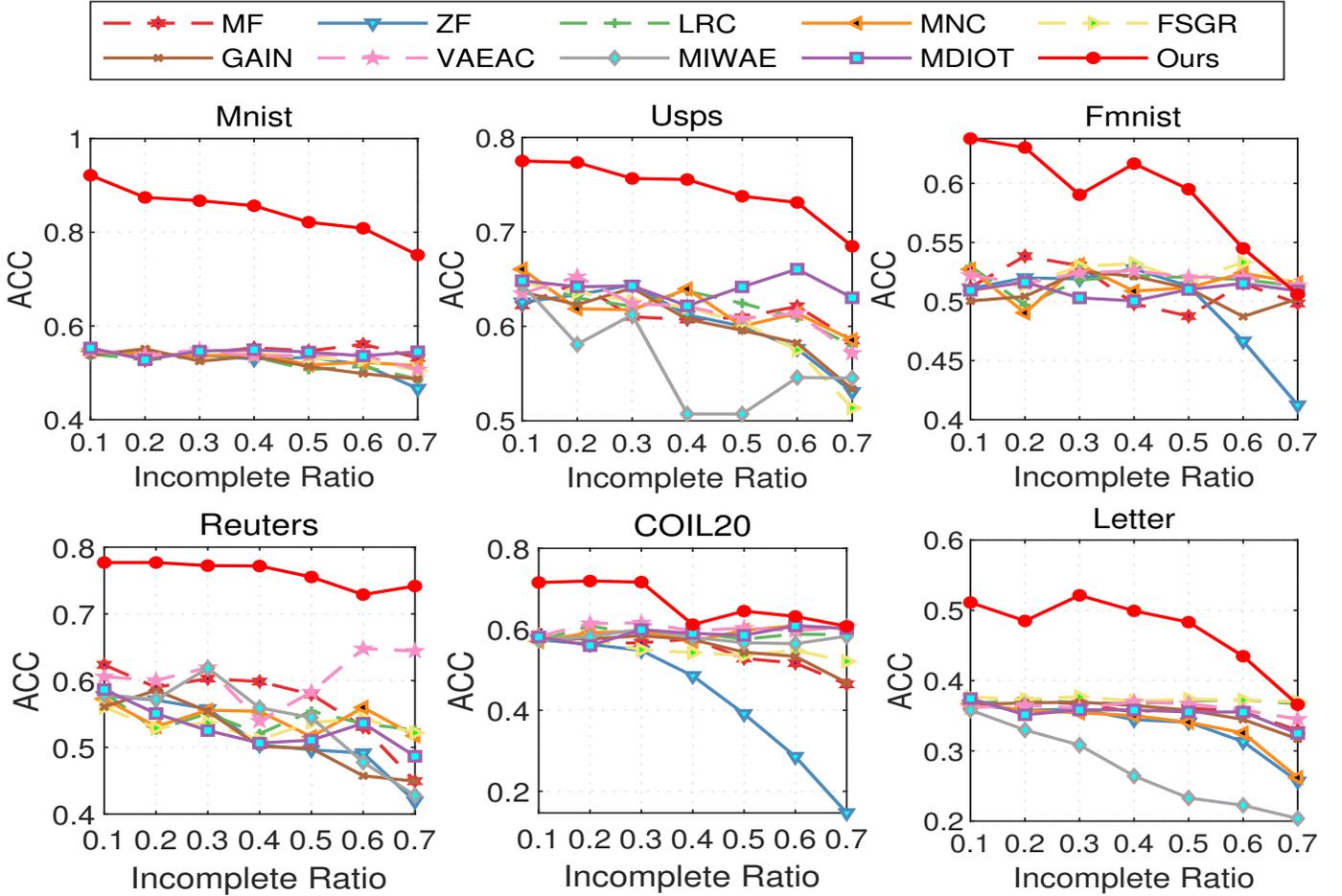} 
\caption{The clustering results of ACC metric on the benchmark datasets with different incomplete ratios. The results of Purity are provided in supplementary materials due to space limit. }
\label{Fig.variousaccratio} 
\end{figure*}

\subsection{Results Comparisons to Alternative Methods}
Table \ref{aggregated table} shows the aggregated clustering comparison of the above algorithms on the benchmark datasets. The best results are highlighted with boldface and '-' means the out of GPU memory failure. Based on the results, we have the following observations:
\begin{itemize}
\item Our proposed method outperforms all the SOTA imputation competitors in clustering performance by large margins. For example, our algorithm surpasses the second best by $\textbf{50.4\%,\,17\%,\,12\%,\,26\%,\,10\%}$and $\textbf{30\%}$, in terms of ACC on all benchmark datasets. In particular, the margins for the four datasets (Mnist, Usps, Reuters and Letter) are very impressive. These results clearly verify the effectiveness of the proposed network.
\item Comparing with the generative-style methods, the proposed DDIC-OT consistently further improves the clustering performance and achieves better results among the benchmark datasets. GAIN, VAEAC and MIWAE are the chosen representative methods. As can be seen, they concentrate on the generation or imputation task while ignoring the impacts of downstream clustering procedure. The joint optimization framework further contributes to improving performance.
\item MDIOT has been considered as a strong baseline for incomplete data imputation. It outperforms other competitors among most of the datasets. Our proposed algorithm surpasses MDIOT by $\textbf{55.2\%,\,17.1\%,\,15.7\%,\,45.8\%,\,12.6\%}$and $\textbf{33.2\%}$ in terms of ACC on all benchmark datasets. The phenomenon demonstrates the effectiveness of the our proposed architecture. Regardless of directly computing distribution distances in original space, the bottleneck of our embedding layer serves for clustering task and make distributions more discriminate. 
\end{itemize}

\begin{figure*}[btp] 
\centering 
\includegraphics[height= 0.55\textheight,width = 1\textwidth]{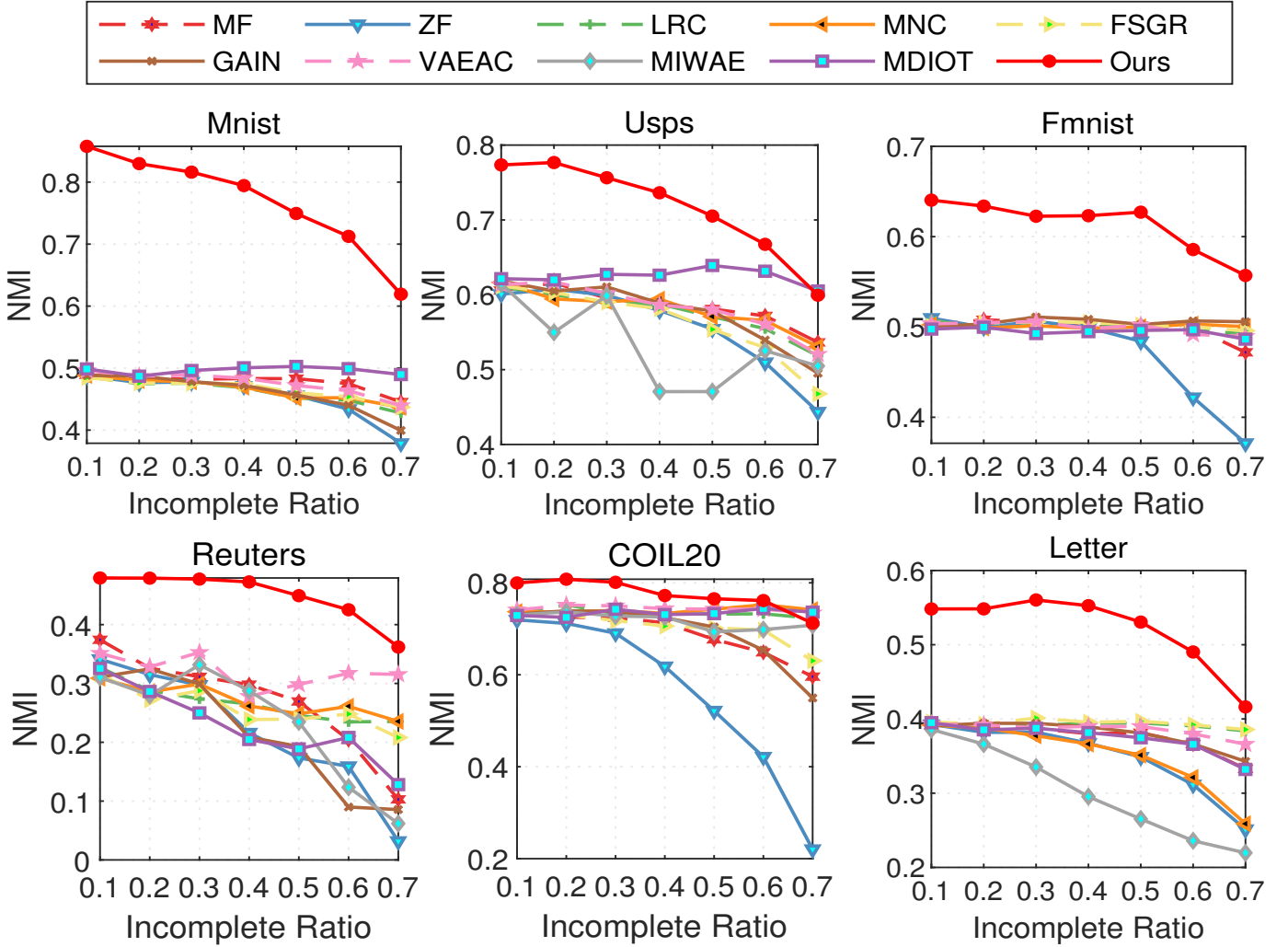} 
\caption{The clustering results of NMI metric on the benchmark datasets with different incomplete ratios. }
\label{Fig.variousnmiratio} 
\end{figure*}

\subsection{Qualitative Study}
In this section, we deeply analyze the clustering performance regarding to various ratios and the evolution of the learned representation. In order to show the comparison between different methods more clearly, we draw the ACC and NMI of compared methods under different missing rates as line graphs as shown in Figure. \ref{Fig.variousaccratio} and \ref{Fig.variousnmiratio}.

From the figure, we can obtain the following observations:\textbf{(1)} As can be seen, with the incomplete ratios increasing, all the methods suffers the degradation of clustering performance due to more unavailable information. Especially for the generative-based methods (VAEAC and MDIOT), their performance drops sharply due to inaccurate imputations.
\textbf{(2)} The results of our proposed method in terms of ACC are higher than all the competing algorithms for different incomplete ratios. Moreover, our method achieves stable performance against the increasing incomplete ratios. These results clearly demonstrates the effectiveness of DDIC-OT. 
\textbf{(3)}  We also show the relative NMI performance of the compared methods in Figure. \ref{Fig.variousnmiratio}. As can be seen, the clustering performance results are consistent with the ACC observations.
\subsection{Ablation Study}

\noindent\textbf{Loss Ablation Study}

We first investigate how the clustering loss and the distribution-preserving loss affect the clustering performance on Mnist/Usps/Reuters, and the results are shown in Table \ref{loss ablation}. In this experiment, we uniformly adopt datasets with 10$\%$ missing ratio. It seems that the $L_{c}$ has more contributions than $L_{s}$ on Mnist/Usps for clustering, and inversely on Reuters. We also conclude that the joint of two counterpart losses further contributes to better performance.

\begin{table}[]
\caption{Loss ablation study with 10$\%$ missing ratio. }
\label{loss ablation}
\vspace{-5pt}
\begin{center}
\resizebox{0.49\textwidth}{!}{
\begin{tabular}{|c|c|c|c|c|c|c|c|}
\hline
Dataset & \multicolumn{2}{c|}{Mnist} & \multicolumn{2}{c|}{Usps}  & \multicolumn{2}{c|}{Reuters}    \\
\hline
Loss& ACC & NMI & ACC & NMI & ACC & NMI \\
\hline
$L_{s}$      & 72.24 & 62.36  & 55.59 & 53.29 & 76.47 & 44.58 \\
$L_{c}$ & 86.84 & 77.7   & 74.81 & 73.88 & 72.42 & 43.09 \\
$L_{s}$ + $L_{c}$  & $\mathbf{92.16}$ & $\mathbf{85.77}$  & $\mathbf{77.5}$  & $\mathbf{77.35}$ & $\mathbf{77.7}$  & $\mathbf{47.93}$ \\
\hline
\end{tabular}}
\end{center}
\end{table}

\noindent\textbf{Sensitivity to initialization imputed values}

The initialization of imputed values has been demonstrated to be an essential part of incomplete clustering. We tested its sensitivity in our DDIC-OT, w.r.t. model performance on Mnist/Usps/Reuters. We evaluated two commonly-used initialization values: zero-filling (ZF) and mean-filling (MF). Table \ref{Different Initializations} shows that DDIC-OT can work stably without clear variation in the overall performance when using different initializations. This verifies that our method is insensitive to network initialization.

\begin{table}[]
\caption{Model sensitivity to different initializations of imputed values of three different missing ratios (10\%/30\%50\%) on three benchmarks. Metric:ACC.}
\label{Different Initializations}
\vspace{-5pt}
\begin{center}
\resizebox{0.49\textwidth}{!}{
\begin{tabular}{|c|c|c|c|c|c|c|c|}
\hline
Dataset & \multicolumn{2}{c|}{Mnist} & \multicolumn{2}{c|}{Usps}  & \multicolumn{2}{c|}{Reuters}    \\
\hline
Initializations& ZF & MF & ZF & MF & ZF & MF \\
\hline
10\% & 90.75 & 92.16 & 74.19 & 77.5  & 74.03 & 77.7  \\
30\% & 83.36 & 86.74 & 72.46 & 75.65 & 72.47 & 77.23 \\
50\% & 78.61 & 82.15 & 71.77 & 73.77 & 72.03 & 75.53 \\
\hline
\end{tabular}}
\end{center}
\end{table}

\section{Conclusion}

In this paper, we propose a novel incomplete clustering methods termed DDIC-OT, which jointly performs clustering and missing data imputation into a unified framework. Extensive experiments are conducted to demonstrate the effectiveness of optimal transport for clustering tasks. In the future, we will consider to construct more advanced network to further improve incomplete clustering performance.


{\small
\bibliographystyle{ieee_fullname}
\bibliography{egbib}

\begin{thebibliography}{10}\itemsep=-1pt

\bibitem{crookston2008yaimpute}
Nicholas~L Crookston and Andrew~O Finley.
\newblock yaimpute: an r package for knn imputation.
\newblock {\em Journal of Statistical Software. 23 (10). 16 p.}, 2008.

\bibitem{dempster1977maximum}
Arthur~P Dempster, Nan~M Laird, and Donald~B Rubin.
\newblock Maximum likelihood from incomplete data via the em algorithm.
\newblock {\em Journal of the Royal Statistical Society: Series B
  (Methodological)}, 39(1):1--22, 1977.

\bibitem{fan2019factor}
Jicong Fan, Lijun Ding, Yudong Chen, and Madeleine Udell.
\newblock Factor group-sparse regularization for efficient low-rank matrix
  recovery.
\newblock In {\em 33rd Conference on Neural Information Processing Systems
  (NeurIPS 2019)}, 2019.

\bibitem{ghahramani1994supervised}
Zoubin Ghahramani and Michael~I Jordan.
\newblock Supervised learning from incomplete data via an em approach.
\newblock In {\em Advances in neural information processing systems}, pages
  120--127, 1994.

\bibitem{greenhalgh2020face}
Trisha Greenhalgh, Manuel~B Schmid, Thomas Czypionka, Dirk Bassler, and
  Laurence Gruer.
\newblock Face masks for the public during the covid-19 crisis.
\newblock {\em Bmj}, 369, 2020.

\bibitem{hull1994database}
Jonathan~J. Hull.
\newblock A database for handwritten text recognition research.
\newblock {\em IEEE Transactions on pattern analysis and machine intelligence},
  16(5):550--554, 1994.

\bibitem{ivanov2019variational}
O Ivanov, M Figurnov, and D Vetrov.
\newblock Variational autoencoder with arbitrary conditioning.
\newblock In {\em 7th International Conference on Learning Representations,
  ICLR 2019}, 2019.

\bibitem{jain2013low}
Prateek Jain, Praneeth Netrapalli, and Sujay Sanghavi.
\newblock Low-rank matrix completion using alternating minimization.
\newblock In {\em Proceedings of the forty-fifth annual ACM symposium on Theory
  of computing}, pages 665--674, 2013.

\bibitem{kang2020partition}
Zhao Kang, Xinjia Zhao, Chong Peng, Hongyuan Zhu, Joey~Tianyi Zhou, Xi Peng,
  Wenyu Chen, and Zenglin Xu.
\newblock Partition level multiview subspace clustering.
\newblock {\em Neural Networks}, 122:279--288, 2020.

\bibitem{kingma2014stochastic}
Diederik~P Kingma and Max Welling.
\newblock Stochastic gradient vb and the variational auto-encoder.
\newblock In {\em Second International Conference on Learning Representations,
  ICLR}, volume~19, 2014.

\bibitem{krizhevsky2012imagenet}
Alex Krizhevsky, Ilya Sutskever, and Geoffrey~E Hinton.
\newblock Imagenet classification with deep convolutional neural networks.
\newblock {\em Advances in neural information processing systems},
  25:1097--1105, 2012.

\bibitem{lecun1998gradient}
Yann LeCun, L{\'e}on Bottou, Yoshua Bengio, and Patrick Haffner.
\newblock Gradient-based learning applied to document recognition.
\newblock {\em Proceedings of the IEEE}, 86(11):2278--2324, 1998.

\bibitem{li2014partial}
Shao-Yuan Li, Yuan Jiang, and Zhi-Hua Zhou.
\newblock Partial multi-view clustering.
\newblock In {\em Proceedings of the AAAI Conference on Artificial
  Intelligence}, volume~28, 2014.

\bibitem{liu2020efficient}
Xinwang Liu, Miaomiao Li, Chang Tang, Jingyuan Xia, Jian Xiong, Li Liu, Marius
  Kloft, and En Zhu.
\newblock Efficient and effective regularized incomplete multi-view clustering.
\newblock {\em IEEE transactions on pattern analysis and machine intelligence},
  2020.

\bibitem{liu2018late}
Xinwang Liu, Xinzhong Zhu, Miaomiao Li, Lei Wang, Chang Tang, Jianping Yin,
  Dinggang Shen, Huaimin Wang, and Wen Gao.
\newblock Late fusion incomplete multi-view clustering.
\newblock {\em IEEE transactions on pattern analysis and machine intelligence},
  41(10):2410--2423, 2018.

\bibitem{liu2019multiple}
Xinwang Liu, Xinzhong Zhu, Miaomiao Li, Lei Wang, En Zhu, Tongliang Liu, Marius
  Kloft, Dinggang Shen, Jianping Yin, and Wen Gao.
\newblock Multiple kernel $ k $ k-means with incomplete kernels.
\newblock {\em IEEE transactions on pattern analysis and machine intelligence},
  42(5):1191--1204, 2019.

\bibitem{lu2014depth}
Si Lu, Xiaofeng Ren, and Feng Liu.
\newblock Depth enhancement via low-rank matrix completion.
\newblock In {\em Proceedings of the IEEE conference on computer vision and
  pattern recognition}, pages 3390--3397, 2014.

\bibitem{mattei2019miwae}
Pierre-Alexandre Mattei and Jes Frellsen.
\newblock Miwae: Deep generative modelling and imputation of incomplete data
  sets.
\newblock In {\em International Conference on Machine Learning}, pages
  4413--4423. PMLR, 2019.

\bibitem{muzellec2020missing}
Boris Muzellec, Julie Josse, Claire Boyer, and Marco Cuturi.
\newblock Missing data imputation using optimal transport.
\newblock In {\em International Conference on Machine Learning}, pages
  7130--7140. PMLR, 2020.

\bibitem{nie2012low}
Feiping Nie, Heng Huang, and Chris Ding.
\newblock Low-rank matrix recovery via efficient schatten p-norm minimization.
\newblock In {\em Proceedings of the AAAI Conference on Artificial
  Intelligence}, volume~26, 2012.

\bibitem{peng2018structured}
Xi Peng, Jiashi Feng, Shijie Xiao, Wei-Yun Yau, Joey~Tianyi Zhou, and Songfan
  Yang.
\newblock Structured autoencoders for subspace clustering.
\newblock {\em IEEE Transactions on Image Processing}, 27(10):5076--5086, 2018.

\bibitem{peyre2019computational}
Gabriel Peyr{\'e}, Marco Cuturi, et~al.
\newblock Computational optimal transport: With applications to data science.
\newblock {\em Foundations and Trends{\textregistered} in Machine Learning},
  11(5-6):355--607, 2019.

\bibitem{Richardson_2020_CVPR}
Trevor~W. Richardson, Wencheng Wu, Lei Lin, Beilei Xu, and Edgar~A. Bernal.
\newblock Mcflow: Monte carlo flow models for data imputation.
\newblock In {\em Proceedings of the IEEE/CVF Conference on Computer Vision and
  Pattern Recognition (CVPR)}, June 2020.

\bibitem{tian2007face}
Yuandong Tian, Wei Liu, Rong Xiao, Fang Wen, and Xiaoou Tang.
\newblock A face annotation framework with partial clustering and interactive
  labeling.
\newblock In {\em 2007 IEEE Conference on Computer Vision and Pattern
  Recognition}, pages 1--8. IEEE, 2007.

\bibitem{wang2018partial}
Qianqian Wang, Zhengming Ding, Zhiqiang Tao, Quanxue Gao, and Yun Fu.
\newblock Partial multi-view clustering via consistent gan.
\newblock In {\em 2018 IEEE International Conference on Data Mining (ICDM)},
  pages 1290--1295. IEEE, 2018.

\bibitem{wen2012solving}
Zaiwen Wen, Wotao Yin, and Yin Zhang.
\newblock Solving a low-rank factorization model for matrix completion by a
  nonlinear successive over-relaxation algorithm.
\newblock {\em Mathematical Programming Computation}, 4(4):333--361, 2012.

\bibitem{wu2019deep}
Jianlong Wu, Keyu Long, Fei Wang, Chen Qian, Cheng Li, Zhouchen Lin, and
  Hongbin Zha.
\newblock Deep comprehensive correlation mining for image clustering.
\newblock In {\em Proceedings of the IEEE/CVF International Conference on
  Computer Vision}, pages 8150--8159, 2019.

\bibitem{xie2016unsupervised}
Junyuan Xie, Ross Girshick, and Ali Farhadi.
\newblock Unsupervised deep embedding for clustering analysis.
\newblock In {\em International conference on machine learning}, pages
  478--487. PMLR, 2016.

\bibitem{yang2015sparse}
Congyuan Yang, Daniel Robinson, and Rene Vidal.
\newblock Sparse subspace clustering with missing entries.
\newblock In {\em International Conference on Machine Learning}, pages
  2463--2472. PMLR, 2015.

\bibitem{yang2019adaptive}
Liu Yang, Chenyang Shen, Qinghua Hu, Liping Jing, and Yingbo Li.
\newblock Adaptive sample-level graph combination for partial multiview
  clustering.
\newblock {\em IEEE Transactions on Image Processing}, 29:2780--2794, 2019.

\bibitem{yoon2018gain}
Jinsung Yoon, James Jordon, and Mihaela Schaar.
\newblock Gain: Missing data imputation using generative adversarial nets.
\newblock In {\em International Conference on Machine Learning}, pages
  5689--5698. PMLR, 2018.

\bibitem{zhang2020deep}
Changqing Zhang, Yajie Cui, Zongbo Han, Joey~Tianyi Zhou, Huazhu Fu, and
  Qinghua Hu.
\newblock Deep partial multi-view learning.
\newblock {\em IEEE transactions on pattern analysis and machine intelligence},
  2020.

\bibitem{zhang2015low}
Changqing Zhang, Huazhu Fu, Si Liu, Guangcan Liu, and Xiaochun Cao.
\newblock Low-rank tensor constrained multiview subspace clustering.
\newblock In {\em Proceedings of the IEEE international conference on computer
  vision}, pages 1582--1590, 2015.

\end{thebibliography}
}

\newpage
\section{Appendix}
In this section, we mainly supplement our work from two aspects, namely the specific settings of the experiment and the display of more experimental results.

\subsection{Model Training}
We summarize in Table. \ref{experiment settings} the dataset specific optimal values of the hyperparameter $\gamma$ which trades off between the optimal transport and the clustering loss, as well as the full connection of two different node numbers selected due to the difference in the dimensional of the dataset, in particular, we only change the last layer of the encoder and the corresponding decoding layer. 

\subsection{More Experimental Results}
\subsubsection{Sensitivity to initialization imputed values}
The initialization of imputed values has been demonstrated to be an essential part of incomplete clustering. We further tested its sensitivity in our DDIC-OT, w.r.t. model performance on Fmnist/COIL20/Letter. Experimental results show that not all the datasets that perform with mean-filling are better than that with zero-filling, for example, when Fmnist with 10\% missing ratio, the effect of zero-filling is much better. Then the Table. \ref{Different Initializations} shows that when different initializations are used, DDIC-OT can run stably without significant changes in overall performance.

\begin{table}[H]
\caption{Model sensitivity to different initializations of imputed values of three different missing ratios (10\%/30\%50\%) on three benchmarks. Metric:ACC.}
\label{Different Initializations}
\vspace{-5pt}
\begin{center}
\resizebox{0.49\textwidth}{!}{
\begin{tabular}{|c|c|c|c|c|c|c|c|}
\hline
Dataset & \multicolumn{2}{c|}{Fmnist} & \multicolumn{2}{c|}{COIL20}  & \multicolumn{2}{c|}{letter}    \\
\hline
Initializations& ZF & MF & ZF & MF & ZF & MF \\
\hline
10\% & 62.92 & 63.79 & 70.49 & 71.58  & 47.44 & 51.12  \\
30\% & 62.59 & 59.03 & 70.56 & 71.67 & 47.37 & 52.14 \\
50\% & 60.21 & 59.5 & 65.35 & 64.51 & 47.75 & 48.32 \\
\hline
\end{tabular}}
\end{center}
\end{table}

\subsubsection{Comprehensive Experimental Results}
To provide more comprehensive experimental results about the performance of our algorithm under different missing ratios, we list the ACC, NMI, PUR of each dataset with a missing ratio ranging from 10\% to 70\% in Table. \ref{mnist},  \ref{usps}, \ref{fmnist}, \ref{Reuters}, \ref{coil20} and \ref{letter}. It is clearly to see that our algorithm has achieved a apparent lead among almost all the missing proportions of the six benchmarks. In order to show the comparison between different methods more clearly, we draw the Purity of compared methods under different missing rates as line graphs as shown in Figure \ref{Fig.variouspurratio}. We observe that our algorithm has advantages for all data sets except Reuters. Moreover, we can find that the performance of our algorithm on Reuters has obvious advantages on the two indicators of ACC and MNI. By analyzing the situation where ACC and MNI are much smaller than PUR, we can clearly see that our comparison algorithm classifies most samples into one cluster instead of achieving really fine clustering results.

\begin{figure*}[!tbp] 
\centering 
\includegraphics[height= 0.4\textheight,width = 1\textwidth]{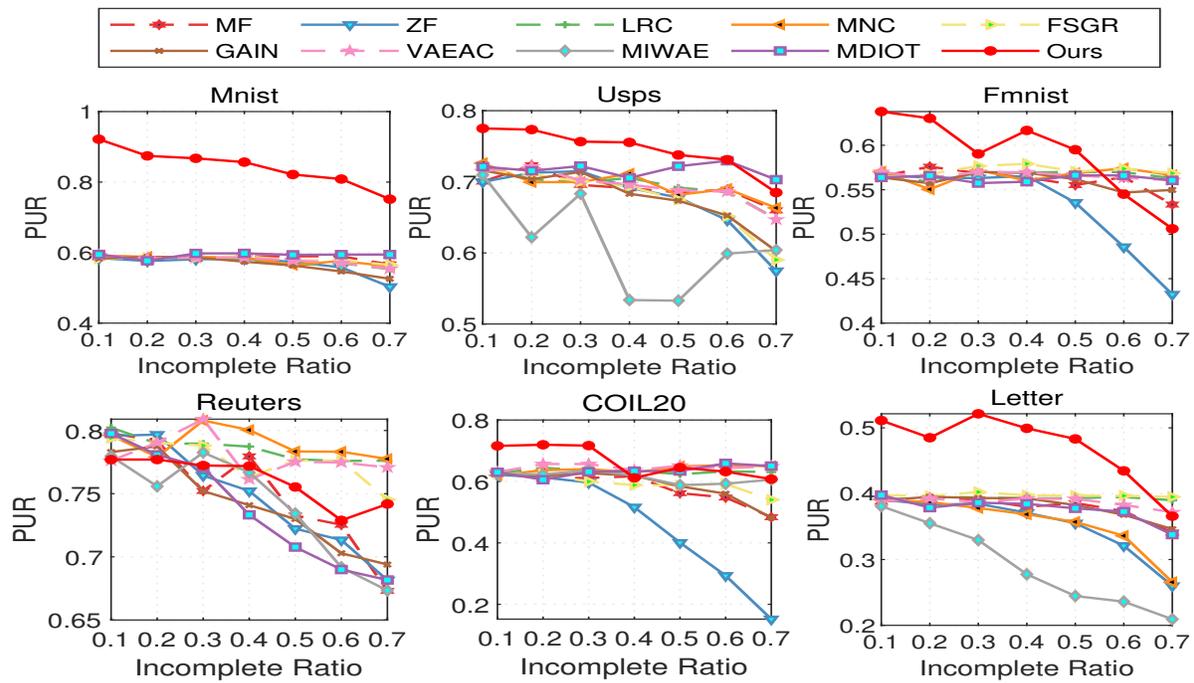} 
\caption{The clustering results of Purity metric on the benchmark datasets with different incomplete ratios.}
\label{Fig.variouspurratio} 
\end{figure*}

\subsection{Evolution and Convergence}

\begin{figure}[H] 
\centering 
\subfloat[Mnist]{\includegraphics[width = 0.16\textwidth]{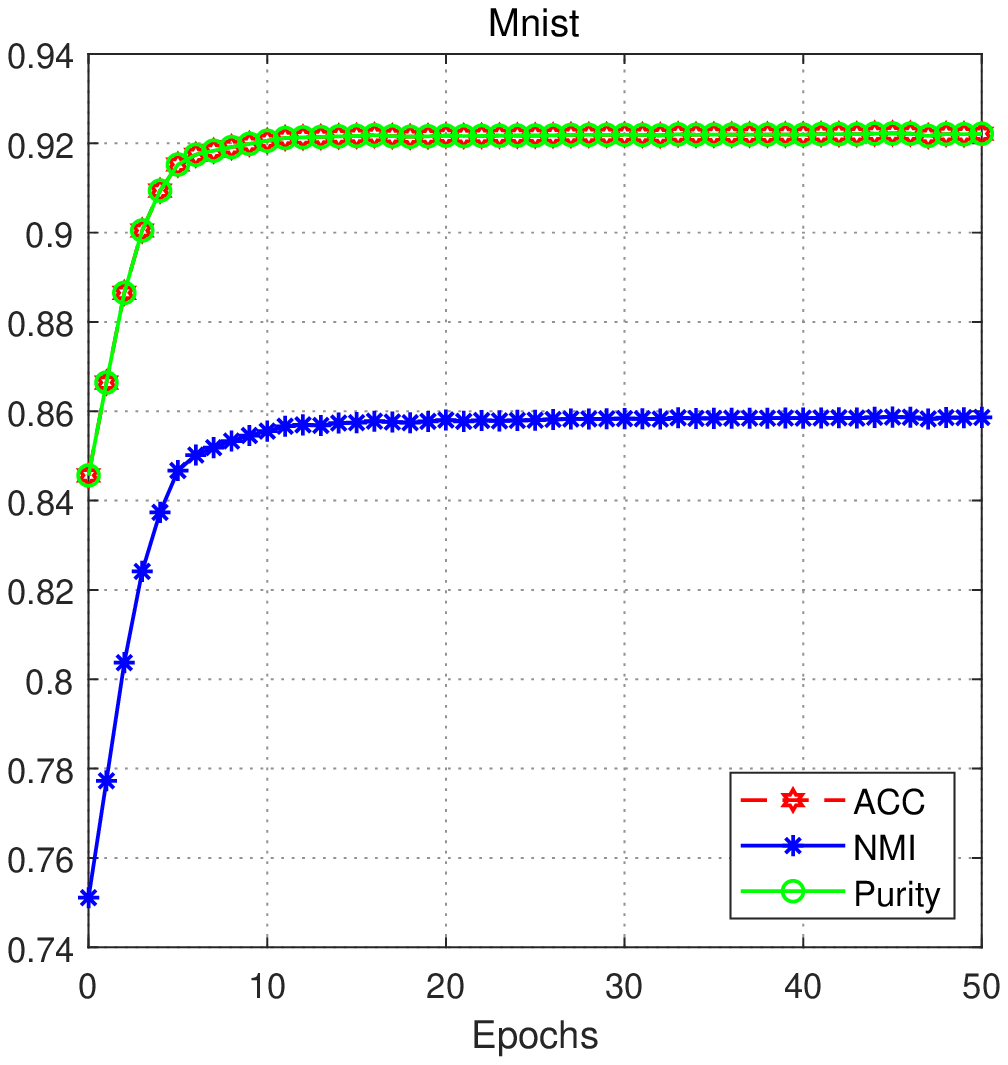}}
\subfloat[Fmnist]{\includegraphics[width = 0.16\textwidth]{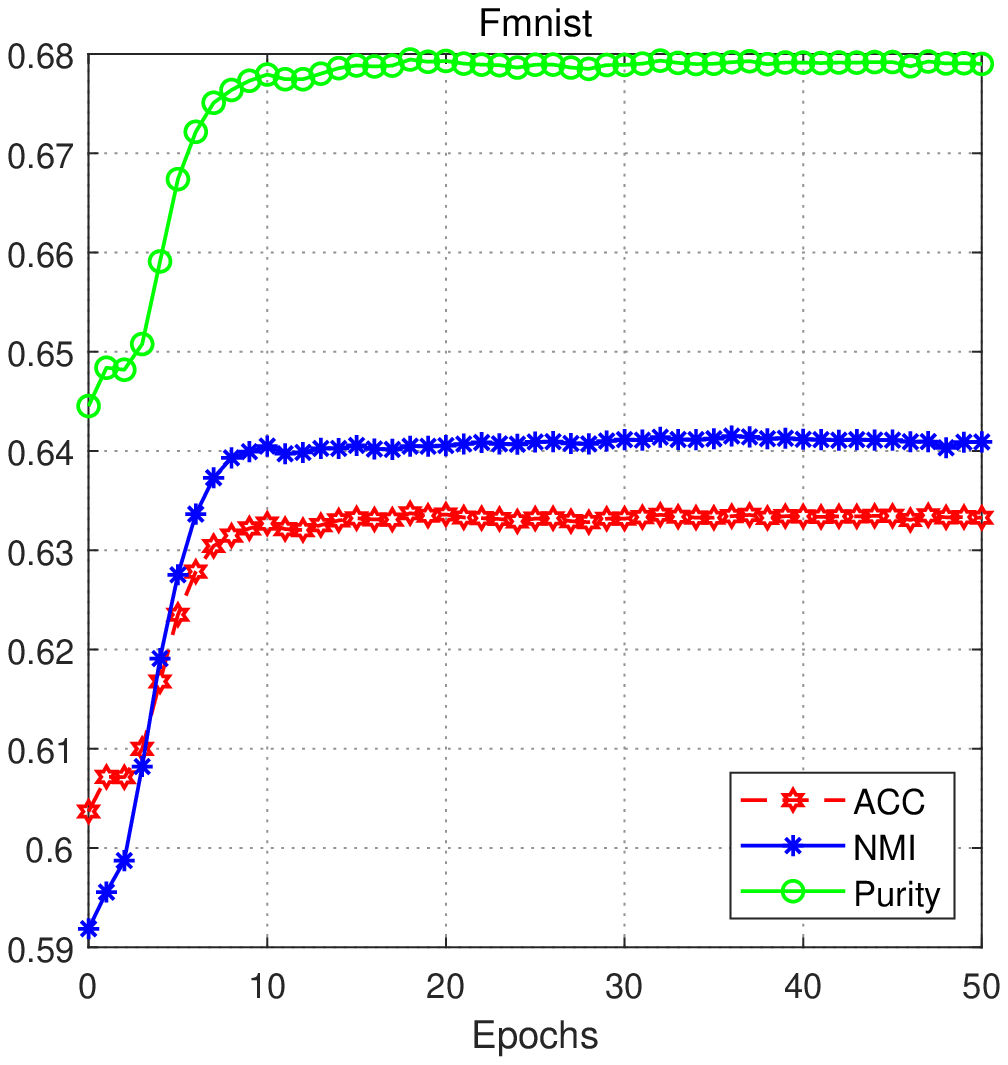}}
\subfloat[Usps]{\includegraphics[width = 0.16\textwidth]{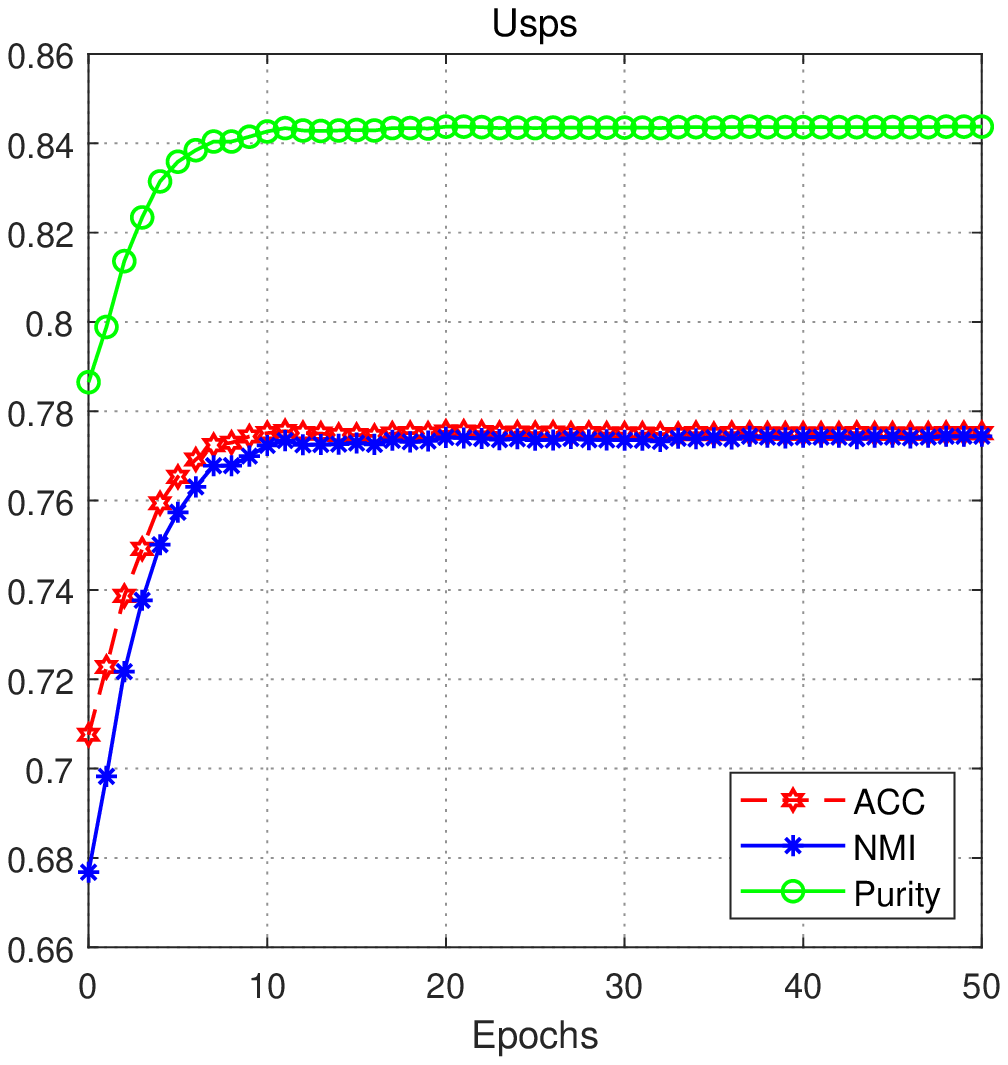}} \\
\caption{The clustering performance evolution with respecting to training epochs when missing ratio is $0.1$.}
\label{Fig.evolution} 
\end{figure}

A large number of experimental results above have proved the effectiveness of the proposed algorithm. In order to demonstrate more clearly about how our model converges to the target, the clustering performance evolution with respecting to training epochs is shown in Figure \ref{Fig.evolution}. Specifically, we show in the figure the changes of ACC, MNI, and PUR with the training epoch when the missing rate of the three datasets is 10\%. It is obviously that our algorithm achieves fast down convergence and satiafactory evolution.

\subsubsection{Qualitative Study}
Figure \ref{imputation results} illustrates the complete data and the imputation performance of the competing method with 30\% missing rate on the Mnist. Images along rows (a) and (b) contains the complete data images and initialized data images which represents mean-imputed data at 30\% missing rate, respectively. The compared imputation methods take the observed images (b) for input, and the complete images are shown for reference. Images along rows (c), (d) include the imputed results after using GAIN, MIDOT respectively. Compared with GAIN and MIDOT, it can be seen from row (e) that our method is visually impressive. More importantly, different from the pixel-level reconstruction used in GAIN and MIDOT, our work is better at preserving the structure of the intended image which is benefit for clustering task. Specifically, the embedding layer has learned the structural information and removed the noise from our network, and this is why our method can obtain leading clustering effects in the face of missing data. The experimental results in Table. \ref{mnist} also verify our statement.

\begin{figure*}[!tbp] 
\centering 
\subfloat{\includegraphics[width = 0.09\textwidth]{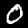}}
\subfloat{\includegraphics[width = 0.09\textwidth]{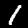}}
\subfloat{\includegraphics[width = 0.09\textwidth]{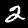}}
\subfloat{\includegraphics[width = 0.09\textwidth]{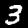}}
\subfloat{\includegraphics[width = 0.09\textwidth]{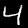}}
\subfloat{\includegraphics[width = 0.09\textwidth]{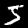}}
\subfloat{\includegraphics[width = 0.09\textwidth]{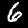}}
\subfloat{\includegraphics[width = 0.09\textwidth]{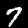}}
\subfloat{\includegraphics[width = 0.09\textwidth]{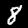}}
\subfloat{\includegraphics[width = 0.09\textwidth]{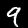}}\\
(a) Complete data images\\
\subfloat{\includegraphics[width = 0.09\textwidth]{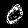}}
\subfloat{\includegraphics[width = 0.09\textwidth]{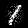}}
\subfloat{\includegraphics[width = 0.09\textwidth]{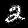}}
\subfloat{\includegraphics[width = 0.09\textwidth]{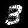}}
\subfloat{\includegraphics[width = 0.09\textwidth]{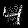}}
\subfloat{\includegraphics[width = 0.09\textwidth]{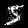}}
\subfloat{\includegraphics[width = 0.09\textwidth]{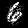}}
\subfloat{\includegraphics[width = 0.09\textwidth]{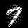}}
\subfloat{\includegraphics[width = 0.09\textwidth]{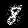}}
\subfloat{\includegraphics[width = 0.09\textwidth]{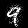}}\\
(b) Initialized data images\\
\subfloat{\includegraphics[width = 0.09\textwidth]{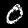}}
\subfloat{\includegraphics[width = 0.09\textwidth]{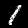}}
\subfloat{\includegraphics[width = 0.09\textwidth]{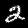}}
\subfloat{\includegraphics[width = 0.09\textwidth]{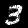}}
\subfloat{\includegraphics[width = 0.09\textwidth]{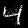}}
\subfloat{\includegraphics[width = 0.09\textwidth]{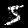}}
\subfloat{\includegraphics[width = 0.09\textwidth]{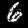}}
\subfloat{\includegraphics[width = 0.09\textwidth]{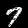}}
\subfloat{\includegraphics[width = 0.09\textwidth]{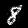}}
\subfloat{\includegraphics[width = 0.09\textwidth]{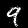}}\\
(c) GAIN\\
\subfloat{\includegraphics[width = 0.09\textwidth]{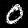}}
\subfloat{\includegraphics[width = 0.09\textwidth]{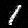}}
\subfloat{\includegraphics[width = 0.09\textwidth]{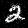}}
\subfloat{\includegraphics[width = 0.09\textwidth]{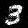}}
\subfloat{\includegraphics[width = 0.09\textwidth]{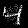}}
\subfloat{\includegraphics[width = 0.09\textwidth]{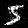}}
\subfloat{\includegraphics[width = 0.09\textwidth]{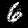}}
\subfloat{\includegraphics[width = 0.09\textwidth]{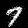}}
\subfloat{\includegraphics[width = 0.09\textwidth]{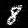}}
\subfloat{\includegraphics[width = 0.09\textwidth]{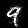}}\\
(d)MIDOT\\
\subfloat{\includegraphics[width = 0.09\textwidth]{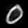}}
\subfloat{\includegraphics[width = 0.09\textwidth]{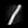}}
\subfloat{\includegraphics[width = 0.09\textwidth]{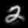}}
\subfloat{\includegraphics[width = 0.09\textwidth]{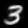}}
\subfloat{\includegraphics[width = 0.09\textwidth]{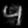}}
\subfloat{\includegraphics[width = 0.09\textwidth]{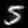}}
\subfloat{\includegraphics[width = 0.09\textwidth]{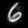}}
\subfloat{\includegraphics[width = 0.09\textwidth]{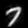}}
\subfloat{\includegraphics[width = 0.09\textwidth]{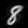}}
\subfloat{\includegraphics[width = 0.09\textwidth]{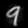}}\\
(e)Ours\\
\caption{Sample imputation results for Mnist at 0.3 missing rate: (a) Complete data images, (b) Initialized data images, (c) GAIN-imputed data images, (d) MIDOT-imputed data images, (e) The reconstruction data of proposed network. }
\label{imputation results} 
\end{figure*}

\begin{table*}[]
\caption{The dataset specific optimal values of the hyperparameter $\gamma$ a and the size of autoencoder.}
\label{experiment settings}
\resizebox{1\textwidth}{!}{
\begin{tabular}{|c||c|c|c|c|c|c|}
\toprule
Dataset      & Mnist        & Usps         & Fmnist       & Reuters      & COIL20       & Letter       \\ \hline
$\gamma$        & 100          & 100          & 100          & 150          & 150          & 150          \\ \hline
Encoder Size & 500/500/1000 & 500/500/1000 & 500/500/1000 & 500/500/2000 & 500/500/2000 & 500/500/2000 \\ \hline
Decoder Size & 1000/500/500 & 1000/500/500 & 1000/500/500 & 2000/500/500 & 2000/500/500 & 2000/500/500 \\ \bottomrule
\end{tabular}}
\end{table*}

\begin{table*}[]
\caption{The performance comparison (mean) of different algorithms on Mnist, where ’-’ means out of the GPU memory.}
\label{mnist}
\resizebox{\textwidth}{!}{
\begin{tabular}{|c|c||c|c|c|c|c|c|c|c|c|c|}
\toprule
\multicolumn{12}{|c|}{Mnist}\\
\hline
\multicolumn{2}{|c||}{Method}            & \multicolumn{5}{c|}{Shallow}          & \multicolumn{5}{c|}{Deep}                                \\ \hline
Missing Rate                              & Metrics               & MF    & ZF    & LRC   & MNC   & FSGR  & GAIN  & VAEAC & MIWAE & MDIOT & Ours \\
\hline
\multirow{3}{*}{10\%} & ACC  & 54.63 & 55.55 & 54.57 & 53.13 & 53.13 & 54.37 & 54.53 & -      & 55.09 & $\mathbf{92.16}$   \\
& NMI    & 48.97 & 49.13 & 49.08 & 48.24 & 47.87 & 48.90 & 49.36 &  -   & 49.16 & $\mathbf{85.77}$  \\     
& PUR     & 58.71 & 59.24 & 59.49 & 58.48 & 57.96 & 58.70 & 59.38 &  -   & 59.09 & $\mathbf{92.16}$    \\
\hline
\multirow{3}{*}{20\%}   & ACC   & 53.41 & 53.94 & 52.68 & 52.04 & 52.86 & 53.36 & 53.57 & - & 52.52 & $\mathbf{87.43}$   \\
 & NMI  & 48.65 & 48.49 & 47.20 & 47.34 & 47.41 & 48.70 & 48.58 &   -  & 48.83 & $\mathbf{82.98}$      \\
 & PUR & 58.88 & 58.65 & 57.99 & 58.48 & 58.00 & 59.31 & 58.31 &  -    & 58.71 & $\mathbf{87.43}$
 \\
\hline
\multirow{3}{*}{30\%}     & ACC    & 55.25 & 52.95 & 53.42 & 51.45 & 53.75 & 51.76 & 53.98 & -  & 53.09 & $\mathbf{86.74}$    \\
  & NMI   & 48.79 & 47.24 & 46.97 & 46.16 & 47.18 & 47.22 & 48.30 &   -   & 48.69 & $\mathbf{81.61}$     \\
  & PUR   & 59.06 & 57.48 & 58.08 & 56.85 & 58.06 & 57.23 & 58.11 &  -   & 58.63 & $\mathbf{86.74}$      \\
\hline
\multirow{3}{*}{40\%}   & ACC   & 54.88 & 55.87 & 52.82 & 51.72 & 52.25 & 53.16 & 55.38 &   -  & 53.15 & $\mathbf{85.66}$  \\
 & NMI  & 48.30 & 47.79 & 46.66 & 46.29 & 46.64 & 47.72 & 48.32 &   -  & 48.92 & $\mathbf{79.44}$       \\
 & PUR  & 58.91 & 58.98 & 58.10 & 57.43 & 58.16 & 57.81 & 58.75 &  -   & 58.84 & $\mathbf{85.66}$      \\
\hline
\multirow{3}{*}{50\%}     & ACC                   & 54.19 & 52.29 & 52.65 & 52.95 & 54.56 & 49.65 & 52.87 &  -  & 54.53 & $\mathbf{82.15}$                    \\
                                          & NMI                   & 47.27 & 45.51 & 46.26 & 46.35 & 46.87 & 45.50 & 47.15 &  -   & 50.66 & $\mathbf{74.94}$                    \\
                                          & PUR                   & 58.10 & 57.04 & 57.81 & 58.11 & 58.63 & 55.96 & 57.71 &   -   & 60.22 & $\mathbf{82.15}$                    \\
                                          \hline
\multirow{3}{*}{60\%}                     & ACC                   & 52.70 & 52.00 & 52.01 & 51.61 & 50.46 & 50.97 & 51.61 &  -  & 54.81 & $\mathbf{80.86}$                    \\
                                          & NMI                   & 45.87 & 42.25 & 44.78 & 44.85 & 44.34 & 45.06 & 46.07 &  -   & 50.52 & $\mathbf{71.25}$                    \\
                                          & PUR                   & 56.81 & 54.23 & 56.89 & 56.76 & 56.50 & 56.45 & 57.07 &   -  & 60.04 & $\mathbf{80.86}$                    \\
\hline
\multirow{3}{*}{70\%}                     & ACC                   & 52.32 & 50.33 & 50.36 & 49.89 & 49.83 & 47.75 & 50.27 &  -  & 54.88 & $\mathbf{75.15}$                    \\
                                          & NMI                   & 44.47 & 39.26 & 43.55 & 43.01 & 43.27 & 39.88 & 43.89 &  - & 49.09 & $\mathbf{61.92}$                    \\
                                          & PUR                   & 56.96 & 52.55 & 56.08 & 55.42 & 55.65 & 52.59 & 55.53 & -    & 59.74 & $\mathbf{75.15}$ \\
                                          \bottomrule
\end{tabular}}
\end{table*}

\begin{table*}[]
\caption{The performance comparison (mean) of different algorithms on Usps.}
\label{usps}
\resizebox{\textwidth}{!}{
\begin{tabular}{|c|c||c|c|c|c|c|c|c|c|c|c|}
\toprule
\multicolumn{12}{|c|}{Usps}\\
\hline
\multicolumn{2}{|c||}{Method}            & \multicolumn{5}{c|}{Shallow}          & \multicolumn{5}{c|}{Deep}                                \\ \hline
Missing Rate                              & Metrics               & MF    & ZF    & LRC   & MNC   & FSGR  & GAIN  & VAEAC & MIWAE & MDIOT & Ours \\
\hline
\multirow{3}{*}{10\%}            & ACC     & 63.96 & 63.63 & 62.70 & 66.14 & 62.15 & 63.91 & 64.69 & 65.09 & 62.49 & $\mathbf{77.5}$                     \\
                                 & NMI     & 61.28 & 61.00 & 60.62 & 61.71 & 60.87 & 61.22 & 61.98 & 62.18 & 61.61 & $\mathbf{77.35}$                     \\
                                 & PUR     & 71.69 & 70.85 & 70.97 & 73.12 & 69.82 & 71.46 & 72.20 & 72.79 & 71.08 & $\mathbf{84.34}$                     \\
                                 \hline
\multirow{3}{*}{20\%}            & ACC     & 64.04 & 63.23 & 64.55 & 63.63 & 64.01 & 64.33 & 63.54 & 59.12 & 65.86 & $\mathbf{77.5}$                     \\
                                 & NMI     & 60.58 & 60.31 & 60.15 & 60.49 & 60.30 & 61.87 & 61.63 & 54.80 & 63.49 & $\mathbf{77.67}$                     \\
                                 & PUR     & 70.68 & 70.51 & 71.75 & 71.21 & 71.13 & 71.96 & 71.73 & 62.40 & 73.64 & $\mathbf{83.9}$                      \\
                                 \hline
\multirow{3}{*}{30\%}            & ACC     & 62.99 & 62.83 & 64.55 & 60.25 & 62.60 & 62.11 & 64.63 & 60.49 & 63.52 & $\mathbf{77.65}$                     \\
                                 & NMI     & 60.31 & 59.65 & 59.93 & 58.85 & 58.90 & 60.26 & 61.07 & 59.56 & 62.67 & $\mathbf{75.64}$                     \\
                                 & PUR     & 70.66 & 70.32 & 71.42 & 68.52 & 69.75 & 69.54 & 71.89 & 67.16 & 71.06 & $\mathbf{83.11}$                     \\
                                 \hline
\multirow{3}{*}{40\%}            & ACC     & 59.60 & 61.08 & 61.64 & 63.94 & 61.52 & 62.08 & 62.48 & 53.26 & 64.02 & $\mathbf{75.54}$                     \\
                                 & NMI     & 58.60 & 57.90 & 57.58 & 59.00 & 57.95 & 59.44 & 59.22 & 49.61 & 62.46 & $\mathbf{73.63}$                     \\
                                 & PUR     & 68.54 & 69.20 & 68.92 & 70.75 & 69.33 & 69.24 & 69.41 & 56.68 & 70.61 & $\mathbf{81.86}$                     \\
                                 \hline
\multirow{3}{*}{50\%}            & ACC     & 60.55 & 60.66 & 61.32 & 60.81 & 60.53 & 59.61 & 62.73 & 48.96 & 63.73 & $\mathbf{73.77}$                     \\
                                 & NMI     & 58.24 & 55.44 & 56.50 & 57.46 & 55.54 & 58.14 & 58.49 & 44.48 & 63.02 & $\mathbf{70.52}$                     \\
                                 & PUR     & 69.18 & 67.85 & 68.13 & 68.32 & 67.15 & 67.33 & 70.24 & 51.67 & 70.89 & $\mathbf{80}$                        \\
                                 \hline
\multirow{3}{*}{60\%}            & ACC     & 61.77 & 56.24 & 62.58 & 63.49 & 56.21 & 57.99 & 60.15 & 57.00 & 63.94 & $\mathbf{73.11}$                     \\
                                 & NMI     & 57.68 & 50.32 & 56.12 & 57.23 & 52.49 & 54.19 & 56.02 & 53.54 & 63.02 & $\mathbf{66.74}$                     \\
                                 & PUR     & 69.07 & 62.59 & 69.45 & 69.36 & 64.03 & 64.94 & 67.77 & 60.15 & 71.29 & $\mathbf{77.64}$                     \\
                                 \hline
\multirow{3}{*}{70\%}            & ACC     & 58.76 & 51.42 & 58.16 & 58.61 & 52.29 & 53.70 & 56.75 & 55.81 & 61.95 & $\mathbf{68.48}$                     \\
                                 & NMI     & 53.41 & 43.77 & 51.34 & 52.97 & 46.88 & 49.52 & 51.85 & 51.26 & $\mathbf{60.35}$  & 59.95                    \\
                                 & PUR     & 65.55 & 57.14 & 65.03 & 66.04 & 59.65 & 60.28 & 64.11 & 61.70 & 69.98 & $\mathbf{70.5}$                     \\
                                 \bottomrule
\end{tabular}}
\end{table*}

\begin{table*}[]
\caption{The performance comparison (mean) of different algorithms on Fmnist, where ’-’ means out of the GPU memory.}
\label{fmnist}
\resizebox{\textwidth}{!}{
\begin{tabular}{|c|c||c|c|c|c|c|c|c|c|c|c|}
\toprule
\multicolumn{12}{|c|}{Fmnist}\\
\hline
\multicolumn{2}{|c||}{Method}            & \multicolumn{5}{c|}{Shallow}          & \multicolumn{5}{c|}{Deep}                                \\ \hline
Missing Rate                              & Metrics               & MF    & ZF    & LRC   & MNC   & FSGR  & GAIN  & VAEAC & MIWAE & MDIOT & Ours \\
\hline
\multirow{3}{*}{10\%}            & ACC     & 52.28 & 51.07 & 52.46 & 51.95 & 49.56 & 51.01 & 53.32 & -  & 51.55 & $\mathbf{63.79}$                     \\
                                 & NMI     & 50.65 & 50.04 & 51.09 & 50.41 & 50.11 & 50.83 & 51.07 & -  & 50.27 & $\mathbf{64.04}$                     \\
                                 & PUR     & 56.81 & 56.70 & 57.27 & 56.88 & 55.74 & 57.05 & 57.66 & -  & 56.96 & $\mathbf{68.28}$                     \\
                                 \hline
\multirow{3}{*}{20\%}            & ACC     & 51.81 & 52.31 & 51.70 & 49.93 & 50.31 & 51.80 & 50.99 & -  & 55.98 & $\mathbf{63.03}$                     \\
                                 & NMI     & 50.46 & 49.61 & 50.18 & 50.09 & 49.47 & 50.70 & 50.51 & -  & 50.53 & $\mathbf{63.28}$                     \\
                                 & PUR     & 57.09 & 56.18 & 56.69 & 56.42 & 55.86 & 56.63 & 56.29 & -  & 58.45 & $\mathbf{67.45}$                    \\
                                 \hline
\multirow{3}{*}{30\%}            & ACC     & 53.11 & 51.75 & 50.31 & 54.34 & 50.16 & 53.61 & 50.26 & -  & 51.38 & $\mathbf{59.03}$                     \\
                                 & NMI     & 50.97 & 49.55 & 50.02 & 50.67 & 49.83 & 51.41 & 49.99 & -  & 49.84 & $\mathbf{62.26}$                     \\
                                 & PUR     & 57.56 & 56.06 & 56.42 & 58.11 & 56.43 & 57.42 & 56.37 & -  & 56.55 & $\mathbf{63.52}$                     \\
                                 \hline
\multirow{3}{*}{40\%}            & ACC     & 53.04 & 50.32 & 52.72 & 52.72 & 51.25 & 50.88 & 53.94 & -  & 50.62 & $\mathbf{61.67}$                     \\
                                 & NMI     & 49.98 & 49.16 & 51.09 & 50.10 & 50.53 & 50.77 & 50.37 & -  & 49.38 & $\mathbf{62.32}$                    \\
                                 & PUR     & 56.73 & 55.68 & 57.95 & 57.46 & 56.90 & 56.31 & 57.59 & -  & 56.03 & $\mathbf{66.06}$                     \\
                                 \hline
\multirow{3}{*}{50\%}            & ACC     & 50.58 & 51.14 & 52.48 & 52.28 & 51.53 & 50.72 & 52.90 & -  & 51.84 & $\mathbf{59.5}$                      \\
                                 & NMI     & 50.13 & 48.88 & 50.83 & 50.50 & 49.78 & 49.88 & 50.27 & -  & 49.72 & $\mathbf{62.71}$                     \\
                                 & PUR     & 56.17 & 54.68 & 57.74 & 57.34 & 56.65 & 55.70 & 57.35 & -  & 56.34 & $\mathbf{64.48}$                     \\
                                 \hline
\multirow{3}{*}{60\%}            & ACC     & 50.36 & 44.51 & 52.59 & 51.88 & 53.73 & 51.25 & 51.21 & -  & 51.95 & $\mathbf{54.5}$                      \\
                                 & NMI     & 49.51 & 42.57 & 50.33 & 49.66 & 50.05 & 50.79 & 49.12 & -  & 49.88 & $\mathbf{58.57}$                     \\
                                 & PUR     & 55.71 & 47.26 & 57.17 & 56.65 & 57.44 & 56.28 & 56.20 & -  & 56.27 & $\mathbf{59.4}$                      \\
                                 \hline
\multirow{3}{*}{70\%}            & ACC     & 46.72 & 41.76 & 51.30 & $\mathbf{53.78}$  & 49.43 & 49.34 & 52.17 & -  & 51.60 & 50.61                    \\
                                 & NMI     & 48.21 & 38.28 & 49.24 & 50.67 & 48.68 & 50.82 & 49.42 & -  & 48.54 & $\mathbf{55.7}$                      \\
                                 & PUR     & 52.28 & 43.94 & 55.89 & $\mathbf{57.76}$  & 55.55 & 54.32 & 56.85 & -  & 55.81 & 55.11    \\               
                                 \bottomrule
\end{tabular}}
\end{table*}

\begin{table*}[]
\caption{The performance comparison (mean) of different algorithms on Reuters.}
\label{Reuters}
\resizebox{\textwidth}{!}{
\begin{tabular}{|c|c||c|c|c|c|c|c|c|c|c|c|}
\toprule
\multicolumn{12}{|c|}{Reuters}\\
\hline
\multicolumn{2}{|c||}{Method}            & \multicolumn{5}{c|}{Shallow}          & \multicolumn{5}{c|}{Deep}                                \\ \hline
Missing Rate                              & Metrics               & MF    & ZF    & LRC   & MNC   & FSGR  & GAIN  & VAEAC & MIWAE & MDIOT & Ours \\
\hline
\multirow{3}{*}{10\%}            & ACC     & 54.42 & 50.22 & 54.84 & 59.12 & 50.20 & 54.18 & 66.01 & 58.43 & 56.45 & $\mathbf{77.7}$                      \\
                                 & NMI     & 26.44 & 24.83 & 29.55 & 32.78 & 27.06 & 30.24 & 38.17 & 32.80 & 33.02 & $\mathbf{47.93}$                    \\
                                 & PUR     & 79.13 & 76.61 & 79.29 & $\mathbf{84.46}$  & 73.98 & 78.46 & 81.17 & 77.95 & 79.56 & 77.98                    \\
                                 \hline
\multirow{3}{*}{20\%}            & ACC     & 53.14 & 59.05 & 53.26 & 56.63 & 50.44 & 60.33 & 59.92 & 55.71 & 56.96 & $\mathbf{77.7}$                      \\
                                 & NMI     & 28.17 & 33.63 & 29.46 & 31.38 & 25.28 & 33.57 & 34.93 & 29.03 & 32.44 & $\mathbf{47.9}$                      \\
                                 & PUR     & 77.25 & 78.25 & 79.93 & 80.57 & 76.27 & 77.10 & $\mathbf{80.87}$  & 75.61 & 78.87 & 78.14                    \\
                                 \hline
\multirow{3}{*}{30\%}            & ACC     & 62.01 & 51.13 & 54.70 & 57.28 & 56.09 & 55.89 & 56.51 & 66.75 & 54.74 & $\mathbf{77.23}$                    \\
                                 & NMI     & 35.11 & 26.12 & 29.44 & 30.51 & 27.94 & 26.04 & 30.36 & 39.30 & 26.81 & $\mathbf{47.74}$                     \\
                                 & PUR     & 78.76 & 74.97 & 78.97 & $\mathbf{80.97}$  & 80.46 & 75.37 & 79.40 & 77.21 & 74.82 & 77.69                    \\
                                 \hline
\multirow{3}{*}{40\%}            & ACC     & 50.96 & 48.91 & 52.17 & 55.06 & 53.73 & 50.26 & 57.72 & 54.49 & 56.09 & $\mathbf{77.18}$                     \\
                                 & NMI     & 21.12 & 16.63 & 26.58 & 25.21 & 24.70 & 20.08 & 28.95 & 26.09 & 27.39 & $\mathbf{47.26}$                     \\
                                 & PUR     & 72.09 & 70.82 & 78.57 & 77.85 & 78.70 & 72.87 & $\mathbf{78.96}$  & 77.30 & 75.89 & 77.21                    \\
                                 \hline
\multirow{3}{*}{50\%}            & ACC     & 55.84 & 52.30 & 55.16 & 52.43 & 53.12 & 57.10 & 60.13 & 49.54 & 51.08 & $\mathbf{75.53}$                     \\
                                 & NMI     & 26.23 & 19.95 & 26.82 & 25.09 & 25.96 & 26.13 & 33.13 & 19.71 & 19.95 & $\mathbf{44.92}$                     \\
                                 & PUR     & 75.20 & 74.20 & 76.50 & 77.22 & 77.54 & 72.47 & $\mathbf{77.59}$  & 70.13 & 71.52 & 75.71                    \\
                                 \hline
\multirow{3}{*}{60\%}            & ACC     & 48.79 & 47.71 & 51.43 & 53.42 & 53.53 & 45.06 & 68.67 & 46.71 & 52.87 & $\mathbf{72.89}$                     \\
                                 & NMI     & 14.51 & 12.34 & 21.21 & 25.61 & 25.61 & 11.27 & 37.41 & 14.16 & 16.12 & $\mathbf{42.53}$                    \\
                                 & PUR     & 70.50 & 69.94 & 75.01 & 79.22 & 78.80 & 67.96 & $\mathbf{80.21}$  & 70.25 & 69.90 & 73.3                     \\
                                 \hline
\multirow{3}{*}{70\%}            & ACC     & 43.62 & 44.04 & 51.63 & 50.57 & 55.51 & 43.08 & 61.54 & 44.53 & 48.94 & $\mathbf{74.18}$                     \\
                                 & NMI     & 7.20  & 7.43  & 22.95 & 22.88 & 25.36 & 6.43  & 31.32 & 8.17  & 13.76 & $\mathbf{36.19}$                     \\
                                 & PUR     & 67.31 & 69.06 & 76.41 & 75.53 & $\mathbf{78.01}$  & 68.17 & 75.86 & 68.23 & 67.65 & 70.01                \\               
                                 \bottomrule
\end{tabular}}
\end{table*}

\begin{table*}[]
\caption{The performance comparison (mean) of different algorithms on COIL20.}
\label{coil20}
\resizebox{\textwidth}{!}{
\begin{tabular}{|c|c||c|c|c|c|c|c|c|c|c|c|}
\toprule
\multicolumn{12}{|c|}{COIL20}\\
\hline
\multicolumn{2}{|c||}{Method}            & \multicolumn{5}{c|}{Shallow}          & \multicolumn{5}{c|}{Deep}                                \\ \hline
Missing Rate                              & Metrics               & MF    & ZF    & LRC   & MNC   & FSGR  & GAIN  & VAEAC & MIWAE & MDIOT & Ours \\
\hline
\multirow{3}{*}{10\%}            & ACC     & 58.03 & 57.26 & 56.25 & 58.10 & 59.19 & 58.13 & 57.99 & 56.60 & 58.61 & $\mathbf{71.58}$                     \\
                                 & NMI     & 73.16 & 71.52 & 72.52 & 73.82 & 74.00 & 73.94 & 73.59 & 72.99 & 73.09 & $\mathbf{79.97}$                     \\
                                 & PUR     & 63.01 & 62.64 & 61.12 & 63.39 & 63.27 & 62.63 & 63.14 & 61.49 & 62.88 & $\mathbf{73.96}$                     \\
                                 \hline
\multirow{3}{*}{20\%}            & ACC     & 58.56 & 57.21 & 58.85 & 57.21 & 55.69 & 58.30 & 55.63 & 57.28 & 59.51 & $\mathbf{71.94}$                     \\
                                 & NMI     & 73.32 & 71.31 & 73.91 & 73.24 & 71.96 & 73.11 & 72.64 & 73.55 & 73.89 & $\mathbf{80.78}$                     \\
                                 & PUR     & 63.03 & 62.39 & 64.06 & 62.91 & 60.38 & 62.35 & 60.44 & 62.01 & 63.99 & $\mathbf{74.93}$                     \\
                                 \hline
\multirow{3}{*}{30\%}            & ACC     & 57.62 & 55.13 & 60.90 & 60.09 & 59.20 & 53.03 & 59.33 & 58.15 & 57.29 & $\mathbf{71.67}$                    \\
                                 & NMI     & 72.04 & 69.28 & 75.28 & 74.28 & 73.21 & 71.14 & 74.29 & 72.17 & 72.13 & $\mathbf{80.11}$                     \\
                                 & PUR     & 61.65 & 59.65 & 65.51 & 64.35 & 63.16 & 58.73 & 64.44 & 62.22 & 61.91 & $\mathbf{74.24}$                     \\
                                 \hline
\multirow{3}{*}{40\%}            & ACC     & 55.89 & 48.82 & 60.67 & 59.42 & 57.74 & 56.25 & 59.40 & 59.33 & 57.72 & $\mathbf{61.18}$                     \\
                                 & NMI     & 70.31 & 62.11 & 74.57 & 74.51 & 71.57 & 72.30 & 73.86 & 72.84 & 72.68 & $\mathbf{77.23}$                     \\
                                 & PUR     & 59.60 & 50.96 & 64.13 & 63.99 & 62.29 & 61.08 & 63.77 & 63.07 & 62.33 & $\mathbf{75.21}$                     \\
                                 \hline
\multirow{3}{*}{50\%}            & ACC     & 57.05 & 37.79 & 60.74 & 60.23 & 56.53 & 54.84 & 57.38 & 54.98 & 57.67 & $\mathbf{64.51}$                     \\
                                 & NMI     & 69.83 & 51.89 & 75.09 & 74.88 & 70.84 & 71.07 & 72.73 & 67.72 & 73.18 & $\mathbf{76.53}$                     \\
                                 & PUR     & 60.40 & 39.17 & 65.14 & 64.54 & 60.88 & 59.29 & 62.54 & 57.08 & 62.88 & $\mathbf{67.36}$                    \\
                                 \hline
\multirow{3}{*}{60\%}            & ACC     & 52.08 & 22.10 & 60.78 & 64.35 & 57.03 & 52.00 & 58.16 & 56.32 & 58.83 & $\mathbf{63.19}$                     \\
                                 & NMI     & 65.95 & 35.31 & 74.72 & 76.08 & 69.71 & 65.28 & 72.62 & 69.58 & 73.69 & $\mathbf{76.14}$                    \\
                                 & PUR     & 55.22 & 22.61 & 65.22 & $\mathbf{68.43}$  & 61.02 & 54.99 & 62.56 & 58.45 & 63.40 & 67.36                    \\
                                 \hline
\multirow{3}{*}{70\%}            & ACC     & 43.41 & 15.71 & 58.33 & $\mathbf{62.19}$  & 50.13 & 44.51 & 57.51 & 53.50 & 59.52 & 60.76                    \\
                                 & NMI     & 57.13 & 24.58 & 72.32 & $\mathbf{74.37}$  & 61.95 & 53.63 & 71.95 & 68.26 & 73.98 & 71.21                    \\
                                 & PUR     & 44.79 & 16.19 & 63.00 & $\mathbf{66.48}$  & 52.59 & 45.91 & 61.74 & 56.11 & 64.49 & 62.99                        \\               
                                 \bottomrule
\end{tabular}}
\end{table*}

\begin{table*}[!htbp]
\caption{The performance comparison (mean) of different algorithms on Letter.}
\label{letter}
\resizebox{\textwidth}{!}{
\begin{tabular}{|c|c||c|c|c|c|c|c|c|c|c|c|}
\toprule
\multicolumn{12}{|c|}{Letter}\\
\hline
\multicolumn{2}{|c||}{Method}            & \multicolumn{5}{c|}{Shallow}          & \multicolumn{5}{c|}{Deep}                                \\ \hline
Missing Rate                              & Metrics               & MF    & ZF    & LRC   & MNC   & FSGR  & GAIN  & VAEAC & MIWAE & MDIOT & Ours \\
\hline
\multirow{3}{*}{10\%}            & ACC     & 36.72 & 36.38 & 37.38 & 36.91 & 36.73 & 37.24 & 36.99 & 36.56 & 36.39 & $\mathbf{51.12}$                     \\
                                 & NMI     & 39.47 & 39.00 & 39.59 & 39.53 & 39.56 & 39.51 & 39.61 & 38.86 & 39.18 & $\mathbf{54.82}$                     \\
                                 & PUR     & 39.27 & 39.00 & 39.91 & 39.68 & 39.08 & 39.68 & 39.55 & 38.69 & 39.05 & $\mathbf{52.95}$                    \\
                                 \hline
\multirow{3}{*}{20\%}            & ACC     & 36.01 & 36.19 & 36.16 & 35.29 & 36.98 & 36.91 & 36.77 & 33.91 & 37.26 & $\mathbf{48.5}$                      \\
                                 & NMI     & 38.95 & 38.52 & 39.24 & 38.10 & 39.49 & 39.24 & 39.54 & 36.73 & 39.01 & $\mathbf{54.83}$                     \\
                                 & PUR     & 38.59 & 38.62 & 38.89 & 37.74 & 39.60 & 39.18 & 39.26 & 36.22 & 39.26 & $\mathbf{51.26}$                     \\
                                 \hline
\multirow{3}{*}{30\%}            & ACC     & 35.96 & 35.94 & 37.27 & 34.99 & 36.77 & 36.59 & 36.87 & 30.18 & 36.01 & $\mathbf{52.14}$                     \\
                                 & NMI     & 38.74 & 37.91 & 39.66 & 37.81 & 39.71 & 39.17 & 39.45 & 33.16 & 38.63 & $\mathbf{56.03}$                     \\
                                 & PUR     & 38.62 & 38.19 & 39.90 & 37.53 & 39.43 & 39.18 & 39.54 & 32.16 & 38.57 & $\mathbf{54.68}$                     \\
                                 \hline
\multirow{3}{*}{40\%}            & ACC     & 35.65 & 35.13 & 36.48 & 34.82 & 37.00 & 36.69 & 36.49 & 26.83 & 35.86 & $\mathbf{49.93}$                     \\
                                 & NMI     & 38.18 & 36.88 & 39.07 & 37.06 & 39.46 & 38.87 & 38.94 & 29.64 & 38.01 & $\mathbf{55.26}$                     \\
                                 & PUR     & 38.12 & 37.54 & 39.11 & 37.22 & 39.53 & 39.20 & 39.04 & 28.12 & 38.22 & $\mathbf{52.65}$                    \\
                                 \hline
\multirow{3}{*}{50\%}            & ACC     & 35.67 & 33.83 & 36.98 & 35.08 & 37.48 & 36.03 & 36.51 & 23.28 & 35.11 & $\mathbf{48.32}$                     \\
                                 & NMI     & 38.02 & 34.82 & 39.34 & 35.51 & 39.50 & 38.10 & 38.90 & 26.34 & 37.59 & $\mathbf{53.05}$                     \\
                                 & PUR     & 37.98 & 35.54 & 39.48 & 36.50 & 39.66 & 38.45 & 38.89 & 24.55 & 37.41 & $\mathbf{50.54}$                    \\
                                 \hline
\multirow{3}{*}{60\%}            & ACC     & 35.38 & 31.15 & 36.99 & 31.82 & 37.51 & 34.57 & 35.75 & 22.51 & 35.25 & $\mathbf{43.47}$                     \\
                                 & NMI     & 36.41 & 30.72 & 38.87 & 31.17 & 39.17 & 36.70 & 38.22 & 23.98 & 36.66 & $\mathbf{49.02}$                     \\
                                 & PUR     & 37.37 & 32.00 & 39.24 & 32.49 & 39.83 & 37.01 & 38.47 & 23.79 & 37.20 & $\mathbf{45.88}$                     \\
                                 \hline
\multirow{3}{*}{70\%}            & ACC     & 32.46 & 25.04 & $\mathbf{37.18}$  & 27.34 & 37.11 & 32.27 & 35.31 & 20.14 & 33.07 & 36.58                    \\
                                 & NMI     & 33.19 & 24.47 & 38.36 & 26.32 & 38.59 & 34.35 & 36.93 & 22.03 & 33.85 & $\mathbf{41.62}$                     \\
                                 & PUR     & 34.15 & 25.45 & 39.54 & 27.76 & $\mathbf{39.38}$  & 34.99 & 37.62 & 20.80 & 34.46 & 38.97                    \\               
                                 \bottomrule
\end{tabular}}
\end{table*}

\end{document}